% CVPR 2022 Paper Template
% based on the CVPR template provided by Ming-Ming Cheng (https://github.com/MCG-NKU/CVPR_Template)
% modified and extended by Stefan Roth (stefan.roth@NOSPAMtu-darmstadt.de)

\documentclass[10pt,twocolumn,letterpaper]{article}

%%%%%%%%% PAPER TYPE  - PLEASE UPDATE FOR FINAL VERSION
% \usepackage[review]{cvpr}      % To produce the REVIEW version
\usepackage{cvpr}              % To produce the CAMERA-READY version
%\usepackage[pagenumbers]{cvpr} % To force page numbers, e.g. for an arXiv version

% Include other packages here, before hyperref.
\usepackage{graphicx}
\usepackage{amsmath}
\usepackage{amssymb}
\usepackage{booktabs}
\usepackage{amsthm}
\usepackage{multirow}
\newtheorem{theorem}{Theorem}[section]
\newtheorem{corollary}{Corollary}[theorem]
\newtheorem{lemma}[theorem]{Lemma}

\newtheorem{assumption}{Assumption}

\DeclareMathOperator*{\argmin}{arg\,min}
% It is strongly recommended to use hyperref, especially for the review version.
% hyperref with option pagebackref eases the reviewers' job.
% Please disable hyperref *only* if you encounter grave issues, e.g. with the
% file validation for the camera-ready version.
%
% If you comment hyperref and then uncomment it, you should delete
% ReviewTempalte.aux before re-running LaTeX.
% (Or just hit 'q' on the first LaTeX run, let it finish, and you
%  should be clear).
\usepackage[pagebackref,breaklinks,colorlinks]{hyperref}

% Support for easy cross-referencing
\usepackage[capitalize]{cleveref}
\crefname{section}{Sec.}{Secs.}
\Crefname{section}{Section}{Sections}
\Crefname{table}{Table}{Tables}
\crefname{table}{Tab.}{Tabs.}

%%%%%%%%% PAPER ID  - PLEASE UPDATE
 % *** Enter the CVPR Paper ID here

\begin{document}

%%%%%%%%% TITLE - PLEASE UPDATE
\title{Topology Preserving Local Road Network Estimation from Single Onboard Camera Image}

% \author{First Author\\
% Institution1\\
% Institution1 address\\
% {\tt\small firstauthor@i1.org}
% % For a paper whose authors are all at the same institution,
% % omit the following lines up until the closing ``}''.
% % Additional authors and addresses can be added with ``\and'',
% % just like the second author.
% % To save space, use either the email address or home page, not both
% \and
% Second Author\\
% Institution2\\
% First line of institution2 address\\
% {\tt\small secondauthor@i2.org}
% }
\author{ Yigit Baran Can\textsuperscript{1}\space\space\space\space Alexander Liniger\textsuperscript{1}\space\space\space\space Danda Pani Paudel\textsuperscript{1}\space\space\space\space Luc Van Gool\textsuperscript{1,2}\\
\textsuperscript{1}Computer Vision Lab, ETH Zurich\space\space\space\space \textsuperscript{2}VISICS, ESAT/PSI, KU Leuven \\ {\tt\small $\{$yigit.can, alex.liniger, paudel, vangool$\}$@vision.ee.ethz.ch} }
\maketitle

\begin{abstract}
Knowledge of the road network topology is crucial for autonomous planning and navigation. Yet, recovering such topology from a single image has only been explored in part. Furthermore, it needs to refer to the ground plane, where also the driving actions are taken. This paper aims at extracting the local road network topology, directly in the bird's-eye-view (BEV), all in a complex urban setting. The only input consists of a single onboard, forward looking camera image.
We represent the road topology using a set of directed lane curves and their interactions, which are captured using their intersection points. 
To better capture topology, we introduce the concept of \emph{minimal cycles} and their covers. A minimal cycle is the smallest cycle formed by the directed curve segments (between two intersections). The cover is a set of curves whose segments are involved in forming a minimal cycle. We first show that the covers suffice to uniquely represent the road topology. The covers are then used to supervise  deep neural networks, along with the lane curve supervision. These learn to predict the road topology from a single input image. The results on the NuScenes and Argoverse benchmarks are significantly better than those obtained with baselines. Code: \url{https://github.com/ybarancan/TopologicalLaneGraph}. 
\end{abstract}
 % Structured representation of the road network on ground plane is essential for autonomous navigation. However, extracting this representation from an onboard camera poses a significant challenge. Faithful representation of the road network requires more than boundary detection on individual lanes. Specifically, the the relative locations of road segments and their intersection points determine the topology of the road network. In this work, we propose a formulation of the road network that takes higher complexity structures into consideration. Our formulation converts the difficult problem of determining the ordering of the intersections of the road segments into much simpler detection of specific closed curves. We show that introducing this detection task improves the performance of several baselines in estimating the overall topology of the road network

\section{Introduction}

How would you give directions to a driver? One of the most intuitive ways is by stating turns, instead of distances. For example, taking \emph{the third right turn} is more intuitive and robust than going \emph{straight for 100 meters and turn right}. This observation motivates us to model road networks using the involved lanes and their intersections. We model the lane intersections ordered in the direction of traffic. Given a reference centerline $L$ and two lines $I_1, I_2$ intersecting $L$, we consider the set of all possible lines intersecting $L$ between the intersection points $L-I_1$ and $L-I_2$ as an equivalence class. Such modelling allows us to supervise the learning process explicitly using the topological structure of the road network. In turn, the topological consistency during inference is improved. Consider a car moving from the green point P upwards in Fig~\ref{fig:rebut}, which needs to take the first left turn. In the two estimates of \cite{Can_2021_ICCV}, the first left leads to different lanes. While the underlying directed graphs have the same connectivity in all estimates, they have very different topological structures which play an important role in decision making.

\begin{figure}
    \centering
    %\vspace{-0.5em}
    \includegraphics[width=\linewidth]{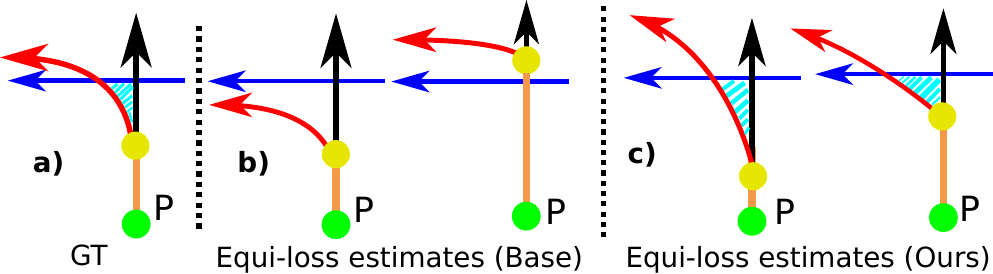}
    % \vspace{-1.5em}
    \caption{
    GT (a), two estimates of base method \cite{Can_2021_ICCV} with similar loss(b), and two estimates of Ours with similar loss (c). Yellow dots refer to connections. Our formulation encourages the shaded (in cyan) region to exist in the prediction, which in turn, ensures preservation of order of intersections. 
    }
    \label{fig:rebut}
    \vspace{-1em}
\end{figure}

For autonomous driving, the information contained in the local road network surrounding the car is vital for the decision making of the autonomous system. The local road network is both used to predict the motion of other agents \cite{cui2019multimodal, hong2019rules, rella2021decoder, zaech2020action} as well as to plan ego-motion \cite{DBLP:conf/rss/BansalKO19, chen2020learning}. The most popular approach to represent the road network is in terms of lane graph based HD-maps, which contain both the information about the centerlines and their connectivity. Most existing methods address the problem of road network extraction by using offline generated HD-maps in combination with a modular perception stack~\cite{jaritz20202d,seif2016autonomous,ma2019exploiting,ravi2018real,casas2021mp3}. However, offline HD-maps based solutions have two major issues: (i)
dependency on the precise localization in the HD-map~\cite{ma2019exploiting,yang2018hdnet}, (ii) requirement to construct and maintain such maps. These requirements severely limit the scalability of autonomous driving to operate in geographically restricted areas. To avoid offline mapping \cite{Can_2021_ICCV} proposed to directly estimate the local road network online from just one onboard image. Inspired by this approach and given the importance of topological consistency for graph based maps we propose to directly supervise the map generation network to estimate topological consistent road networks. 

% This paper proposes a method that learns to extract the local road network directly from a single onboard camera image, in the setting of autonomous driving. It is self evident that the road network information is vital in decision making during autonomous driving. These decisions require the topologically structured plausible choices, which may be obtained by post-processing the visual scene understanding results. We argue that such a two stage processing not only makes the road network process error prone, but also hinders the learning methods to serve to the task at hand. Most existing methods address the problem of road network extraction by using offline generated HD-maps in combination with a modular perception stack~\cite{jaritz20202d,seif2016autonomous,ma2019exploiting,ravi2018real,casas2021mp3}. However, offline HD-maps based solutions come with two major issues: (i)
% dependency on the precise localization in the HD-map~\cite{ma2019exploiting,yang2018hdnet}, (ii) requirement to reconstruction and the maintenance of such maps. These requirements severely limit the scalability of autonomous driving to operate in geographically restricted areas.        
\begin{figure*}
    \centering
    \includegraphics[width=0.8\linewidth]{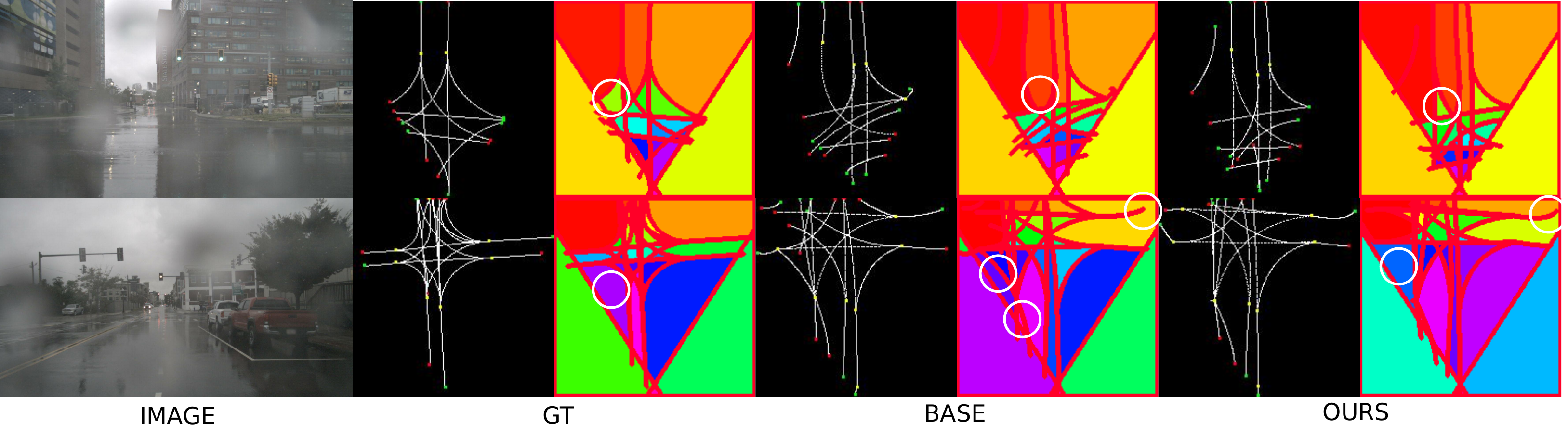}
    \vspace{-0.5em}
    \caption{The road networks and the resulting closed regions (minimal cycles, shown in different colors). In the road network, traffic flows from green to red dots and the yellow dots are connection points of two centerlines. Proposed formulation learns to preserve the identities of the centerlines enclosing the minimal cycles.
    %Thus, the focus is not preserving the shape of these regions but bounding centerlines per cycle. 
    Some interesting regions are shown by white circles on colored images.}
    \label{fig:matching1}
    %\vspace{-3mm}
\end{figure*}

Starting from \cite{Can_2021_ICCV}, we represent the local road network using a set of Bezier curves. Each curve represents a driving lane, which is directed along the traffic flow with the help of starting and end points. However, compared to \cite{Can_2021_ICCV} we also consider the topology of the road network, which is modelled by these directed curves and their intersections. More precisely, we rely on the following statement: \emph{if the intersection order of every lane, with the other lanes, remains the same, the topology of the road network is preserved}. Directly estimating the order of intersections is a hard problem.
%~\footnote{Our attempts on doing so resulted in  no meaningful outcomes, therefore are not reported.}. 
Therefore, we introduce the concept of \emph{minimal cycles} and their covers. A minimal cycle is the smallest cycle formed by the directed curve segments (between two intersections). In Fig.~\ref{fig:matching1}, each minimal cycle is represented by  different colors. The cover is the set of curves whose segments are involved in forming a minimal cycle. Given two sets of curves (one estimation and one ground truth), if all curves in both sets intersect in the same order, then the topology defined by both sets are equivalent. In this work, we show that such equivalence can be measured by comparing the covers of the minimal cycles. Based on our findings, we supervise deep neural networks that learn to predict topology preserving road networks from a single image. To our knowledge, this is the first work studying the problem of estimating topology of a road network, thus going beyond the traditional lane graphs.

Our model predicts lane curves and how they connect as well as the minimal cycles with their covers. During the learning process, both curves and cycles are jointly supervised. The curve supervision is performed by matching predicted curves to those of ground truth by means of the Hungarian algorithm. Similarly, the cycle supervision matches the cover of the predicted cycles to those of the ground truth. Such joint supervision encourages our model to predict accurate and topology preserving road networks during inference, without requiring the branch for cycle supervision.    

For online mapping in autonomous driving and other robotic applications, it is crucial to directly predict the road network in the Birds-Eye-View (BEV) since the action space of the autonomous vehicles is the ground. This stands in contrast to traditional scene understanding which mainly takes place in the image plane. It was recently shown that performing BEV scene understanding in the image plane, and then project it to the ground plane is inferior to directly predicting the output on the ground plane~\cite{DBLP:conf/cvpr/RoddickC20,can2020understanding,wu2020motionnet,paz2020probabilistic,yang2018hdnet,casas2018intentnet,philion2020lift}. Compared to our approach these methods do not provide the local road network, but a segmentation on the BEV. Note that the road regions alone do not provide the desired topological information. Similar methods that perform lane detection are limited to highway like roads where the topology of the lanes is trivial, and are not able to predict road networks in urban scenarios with intersections which is the setting we are interested. As said \cite{Can_2021_ICCV} is able to predict such road networks but does not consider the topology, and we will show that our topological reasoning can improve the methods proposed in \cite{Can_2021_ICCV}.

% Unlike traditional methods, we predict the road networks directly on the Birds-Eye-View (BEV) of the ground plane. This is particularly important since the action space of the autonomous vehicles is the ground. Thus, predictions on the image plane would require to be mapped on the ground plane, which has shown to be inferior to direct BEV estimation~\cite{DBLP:conf/cvpr/RoddickC20,can2020understanding,wu2020motionnet,paz2020probabilistic,yang2018hdnet,casas2018intentnet}. Some works perform predictions using BEV images~\cite{batra2019improved,sun2019leveraging,ventura2018iterative}, 2D-3D camera systems~\cite{pan2020cross,hendy2020fishing}, multiple camera rigs~\cite{DBLP:conf/cvpr/RoddickC20,philion2020lift,can2020understanding}, multiple video frames~\cite{can2020understanding}. Despite of using more advance settings, these methods do not provide the road networks. Instead, only the road regions are segmented. Note that the road regions alone do not provided the desired topological information. Similarly, methods that use a single camera are often limited to lane detection~\cite{DBLP:conf/cvpr/HomayounfarMLU18} or road and object segmentation in BEV~\cite{DBLP:conf/cvpr/RoddickC20,can2020understanding}. % Recently \cite{Can_2021_ICCV} proposed to use Bezier curves to output local road networks. However 

Our predicted connected curves provide us a full lane graph HD map. To this end, our major contributions can be summarized as follows.
% \vspace{-0.2cm}
\begin{enumerate}
\setlength{\itemsep}{0pt}
\setlength{\parskip}{0pt}
\item We propose a novel formulation for the topology of a road network, which is complementary to the classic lane graph approach 
\item We show that a neural network can be trained end-to-end to produce topologically accurate lane graphs from a single onboard image
\item We propose novel metrics to evaluate the topological accuracy of an estimated lane graph
\item The results obtained by our method are superior to the compared methods in both traditional and topological structure metrics

\end{enumerate}

\section{Related Works}
Existing works can be roughly grouped in two distinct groups; first, offline methods, that extract HD map style road networks from aerial images or aggregated sensor data. Second, online methods, which either estimate lane boundaries or perform semantic understanding on the BEV plane, only given current onboard sensor information. Our method is located between the two approaches, estimating HD map style lane graphs, however, based on onboard monocular images. 

% Existing works either focus on HD map extraction from dense 3D points~\cite{homayounfar2019dagmapper}, or the detection of road lanes from a single image~\cite{hou2019learning}. Other variants include  BEV semantic understanding~\cite{DBLP:conf/cvpr/RoddickC20,DBLP:journals/ral/LuMD19,DBLP:journals/ral/PanSLAZ20, DBLP:conf/cvpr/RoddickC20}. 

%Note that the HD map reconstruction of~\cite{homayounfar2019dagmapper} is much more topologically challenging than the lane detection problem of~\cite{hou2019learning}. Our works aims to achieve the results similar to~\cite{homayounfar2019dagmapper} using the image input setup of~\cite{hou2019learning}. 

\noindent\textbf{Road network extraction:} Early works on road network extraction use aerial images~\cite{auclair1999survey, richards1999remote}. Building upon the same setup, recent works~\cite{batra2019improved,sun2019leveraging,ventura2018iterative} perform the network extraction more effectively. Aerial imaging-based approaches only provide coarse road networks. Such predictions may be useful for routing, however, they are not accurate enough for action planning. 

\noindent\textbf{High definition maps:}  HD maps are often reconstructed offline using aggregated 2D and 3D visual information~\cite{liang2019convolutional, homayounfar2018hierarchical,liang2018end}. Although these works are the major motivation behind our work, they require dense 3D point clouds for accurate HD map reconstruction. These methods are also offline methods which recover HD maps in some canonical frame. 

The usage of the recovered maps requires an accurate localization, in many cases. A similar work to ours is~\cite{DBLP:conf/cvpr/HomayounfarMLU18}, where the lane boundaries are detected on highways in the form of polylines. An extension of~\cite{DBLP:conf/cvpr/HomayounfarMLU18} uses a RNN to generate initial boundary points in 3D point clouds. These initial points are then used as seeds for a Polygon-RNN \cite{DBLP:conf/cvpr/AcunaLKF18} that predicts lane boundaries. Our method differs from~\cite{DBLP:conf/cvpr/HomayounfarMLU18} in: (i) point clouds vs. single image input, (ii) highway lane boundaries vs. lane centerlines in an unrestricted setting.

\noindent\textbf{Lane estimation:}
% Lane estimation using monocular cameras is maybe the most used computer vision task in cars currently available, since this information is used for lane following. However, there is still considerable research in this direction \cite{DBLP:conf/ivs/NevenBGPG18, DBLP:conf/iccvw/GansbekeBNPG19}.
There is considerable research in lane estimation using monocular cameras \cite{DBLP:conf/ivs/NevenBGPG18, DBLP:conf/iccvw/GansbekeBNPG19}.
The task is either performed directly on the image plane \cite{DBLP:conf/iccv/GarnettCPLL19, DBLP:journals/corr/abs-2002-06604} or in the BEV plane by projecting the image to the ground plane \cite{DBLP:journals/corr/abs-2011-01535, DBLP:conf/siu/YenIaydinS18,DBLP:conf/ivs/NevenBGPG18}. However, this line of research mainly focuses on highway and country roads, without intersections. In such cases the topology of the resulting road lane work is often trivial since lines to not intersect. Our approach focuses on urban traffic with complex road networks where the topology is fundamental.

\noindent\textbf{BEV understanding:}
Visual scene understanding on the BEV has recently become popular due to its practicality~\cite{DBLP:conf/cvpr/RoddickC20,philion2020lift,can2020understanding}. Some methods also combine images with LIDAR~\cite{pan2020cross,hendy2020fishing}. Maybe the most similar to our method are~\cite{DBLP:conf/cvpr/RoddickC20,DBLP:journals/corr/abs-2002-08394,can2020understanding}, which use a single image or monocular video frames for BEV HD-map semantic understanding. However, these methods do not offer structured outputs, therefore their usage for planning and navigation is limited. 
%Furthermore, upto our knowledge no existing method provides instance level prediction on the BEV using single image input. Note that the method proposed in the paper predicts both HD map and the road objects simultaneous, using one input image and a single neural network.   
%One of the closest works to ours is \cite{DBLP:conf/cvpr/HomayounfarMLU18}, where the authors detect lane boundaries on highway in the form of polylines. Their method accepts sparse 3D point clouds as input and utilize an RNN to detect the initial boundary points. The initial points are used as seeds for the Polygon-RNN \cite{DBLP:conf/cvpr/AcunaLKF18}. Their method is substantially different than ours because of their input (point clouds vs monocular image), setting (highway lane boundaries vs lane centerlines in unrestricted setting) and method (RNN + Polygon-RNN).

In summary, in our paper we work in a setting similar to \cite{homayounfar2019dagmapper}, where the output is a directed acyclic graph. However, the input is not an aggregated image and LIDAR data, but just one onboard image. Thus, the same sensor setup as existing lane estimation works, these however are not designed to work in urban environments. In fact our setting is identical to \cite{Can_2021_ICCV}, but our work does focus on the topology of the lane graph and proposes a method to directly supervise the network to estimate topologically correct graphs.

\section{Method}

\subsection{Lane Graph Representation}
Following \cite{Can_2021_ICCV}, we represent the local road network as a directed graph of lane centerline segments which is often called the lane graph. Let this directed graph be $G(V, E)$ where $V$ are the vertices of the graph (the centerlines) and the edges $E \subseteq \{(x,y)\; |\; (x,y) \in V^2\}$ represent the connectivity among those centerlines. The connectivity can be summarized by the incidence matrix $A$ of the graph $G(V, E)$. A centerline $x$ is connected to another centerline $y$, i.e. $(x,y) \in E$ if and only if the centerline $y$'s starting point is the same as the end point of the centerline $x$. This means $A[x,y] = 1$ if the centerlines $x$ and $y$ are connected. %The incidence matrix contains crucial traffic flow directions. 
We represent centerlines with Bezier curves. 

% This allows us to model a curve of arbitrary length with a fixed number of 2D points. This approach is well suited for a learning algorithm as each centerline corresponds to a fixed number of parameters $\theta$ the network has to predict.

% Therefore, we can learn the locations of these fixed number of control points to represent each centerline where we can denote the total number of parameters to estimate as $\theta$ which is twice the number of control points (xy coordinates).  

\subsection{Topological Representation}

% In this work we consider any pair of curve sets $\textbf{C}_1$ and $\textbf{C}_2$ equivalent if their respective $\textbf{I}_1$ and $\textbf{I}_2$ are same. This implies that given a curve set $\textbf{C}$, we are allowed to move and/or deform curves so long as $\textbf{I}$ stays the same. 
While the directed graph builds an abstract high level representation of the traffic scene, the graph also introduces fundamental topological properties about the road scene. The topological properties depend on the intersections of the centerlines whereas the lane graph depends on the connectivity of the centerlines\footnote{A connection is defined by the incidence matrix of $G(V,E)$ whereas an intersection between two curves is defined in the geometric sense.}. Thus, also considering the topology gives complementary information which we can use to estimate better representations.
% While the directed graph builds a abstract higher level  representation, it is not enough to encode spatial relations among the centerline curves. Consider a crossroads scene where none of the centerlines are connected to each other yet some of them intersect. The changes in these intersections are significant for decision making process yet they are not represented in the directed graph. Therefore, we enhance the graphical representation with the topological properties of the road network on 2D space.
% To make this p
% Let us begin by defining the terms intersecting and connecting. On one hand, we refer to a connection if there is a direct legal access from a centerline to another in the direction of traffic, or in other words if the incidence matrix of $G(V,E)$ is one. On the other hand, an intersection is defined in the geometric sense. Thus, whenever the centerlines ``touch", we call this an intersection. Therefore, all connections are intersections, but the opposite is not true.

We assume that the target BEV area is a bounded 2D Euclidean space, where the known bounding curves represent the borders of the field-of-view (FOV). Identical to the lane graph, each curve has a direction which represents the flow of traffic, while boundary curves have arbitrary directions. We denote the set of all curves including the border curves as $C$. To establish our later results, we assume that any two curves can intersect at most once and a curve does not intersect with itself. Due to the restricted FOV and relative short curve segments this assumption is not restrictive. Moreover, in a lane graph no curve is floating, since every end of a curve either connects to another curve or leaves the bounded space, which also results in an intersection. Let $c\in C$ be a curve and $I_c$ be the ordered sequence of intersections along the direction of curve $c$ and $I_c(m) \in \textbf{P}$ be the $m^{\text{th}}$ intersection of the sequence, where $\textbf{P}$ is the set of all intersection points. The set of $I_c$ for all curves $c$ is denoted by $I$. Combining the curves $C$ and intersection order $I$ we can form our topological structure $T(C, I)$ that together with $G(V, E)$ define the local road network (see a) of Fig.~\ref{fig:basics}). In this example with linear curves, the order of intersection $I_c$ for all curves is given.   % In this subsection we focus on $T(C, I)$.

\begin{figure}
    \centering
    \includegraphics[width=.8\linewidth]{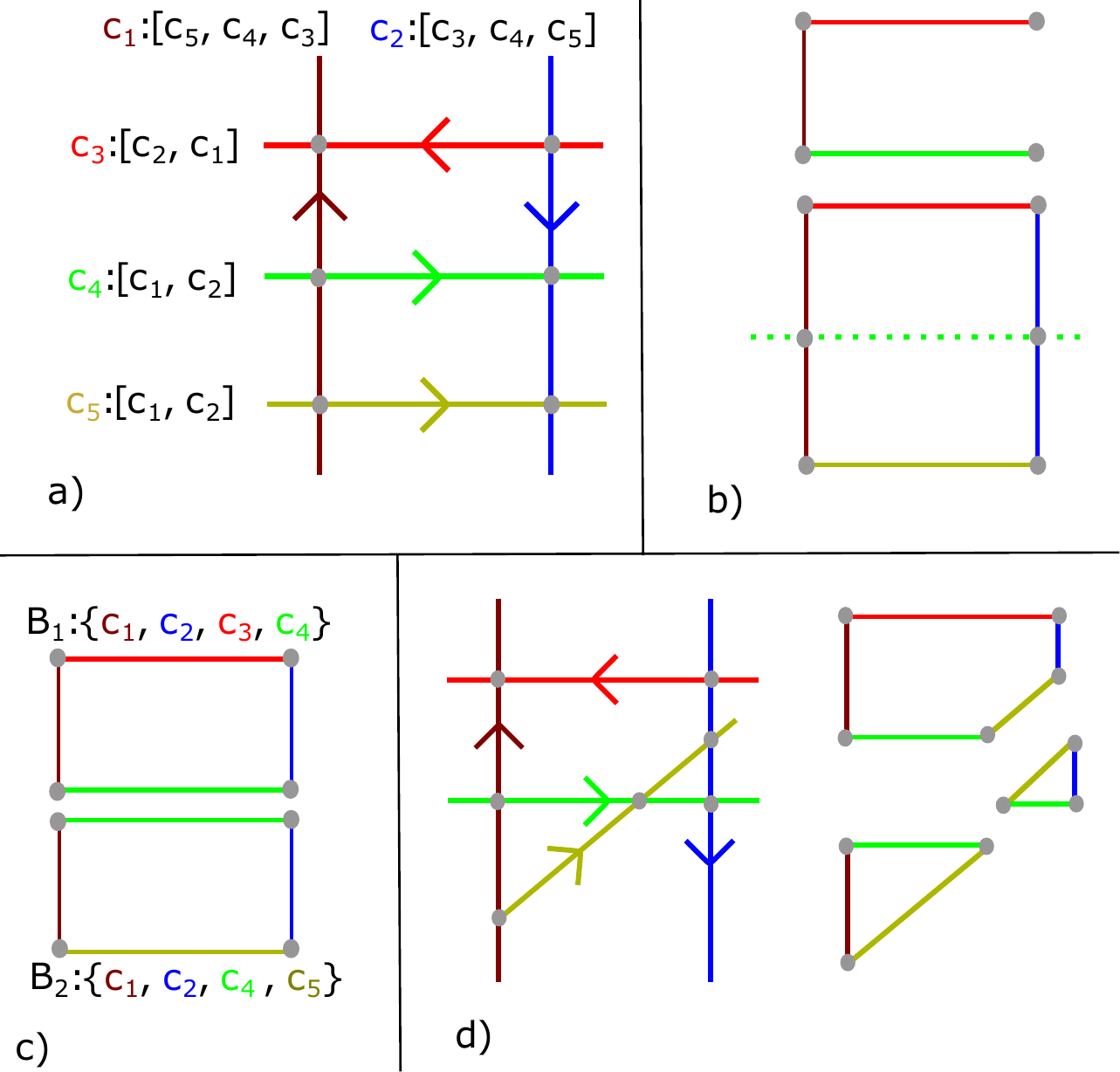}
    \caption{The graphical illustrations of our basic definitions are shown. Gray dots show the intersection points. Every curve part between any two consecutive intersection point is a curve segment. a) The complete network with all the curves $\{c_i\}$ and the order of intersections for each curve. b) An example polycurve (above), a closed (but not minimal) polycurve (below). c) Two minimal closed polycurves (minimal cycles) with the set of curves enclosing them, i.e. their minimal covers $B$. d) Another configuration of the same curves and the resulting minimal cycles.}
    \label{fig:basics}
    \vspace{-1.5em}
\end{figure}

When estimating the lane graph of a traffic scene, or in fact any graph structures formed by curves, we would not only like to correctly estimate the lane graph $G(V,E)$ but also the topological properties $T(C,I)$. However, estimating the ordering of the intersection points directly is very challenging. In the following, we will show that under some assumptions the intersection order $I_c$ for each curve is equivalent to the covers of minimal cycles of curves. This equivalence allows us to efficiently add a topological reasoning to our network. 
Let us first define minimal cycles and covers. A curve segment $S_c(i,j)$ is the subset of the curve $c$ between successive intersection points $i$ and $j$, known due to $I_c$. We define a polycurve $PC$ as a sequence of curve segments $PC_S = (S)_m | S_m(j)=S_{m+1}(i)$. 
% This means a polycurve is a sequence of curve segments where at the end of one segment the next one starts, except for the start and endpoints of the polycurve, if they exist. 
A closed polycurve $CC$ is a polycurve with no endpoints, which completely encloses an area (see b) of Fig.~\ref{fig:basics}). A minimal closed polycurve or minimal cycle $MC$ is a closed polycurve where no curve intersects the area enclosed by $MC$, see Fig.~\ref{fig:basics} c). Note that minimal cycles form a partition of the bounded space. Finally, given a polycurve that forms a MC, we can also define the corresponding minimal cover $B$, which is the set union of the curves that the segments in that polycurve belong to, or in other words the list of curves that form the minimum cycle, see Fig~\ref{fig:basics} c) and d).
% A minimal cover $B$ of a polycurve is defined as the set union of the curves that the segments in that polycurve belong to, or in other words the list of curves that form the mimimum cycle, see Fig~\ref{fig:basics} c) and d).
What makes minimal covers $B$ so interesting is that although they are relatively simple, we will show in the following that they still hold the complete topological information of the road graph and are equivalent to the intersection order $I$.

To establish this equivalence, let us first state the following results that link intersection orders to minimal cycles and covers, which holds under mild conditions detailed in the supplementary material.
\begin{lemma}
\label{lemma1_main}
A minimal closed polycurve (minimal cycle) $MC$ is uniquely identified by its minimal cover $B$. 
\end{lemma}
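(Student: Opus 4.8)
The plan is to argue by contradiction, exploiting the one feature of a minimal cycle that arbitrary closed polycurves lack: its enclosed region meets no curve of $C$. So I would suppose that two \emph{distinct} minimal cycles $MC_1\neq MC_2$ have the same minimal cover $B$ and try to force $MC_1=MC_2$. Throughout, the standing assumptions (each pair of curves meets at most once and transversally, no self\nobreakdash-intersections, no floating curves inside the bounded FOV), together with the ``mild conditions'' promised in the supplement, would be used to exclude the degenerate incidence patterns that could otherwise break the argument below.

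First I would establish a normal form: in any minimal cycle, each curve of its cover contributes exactly one segment to the bounding polycurve. Consecutive segments automatically lie on different curves, since each vertex of the polycurve is a transversal meeting of exactly two curves. If some $c\in B$ contributed two non\nobreakdash-adjacent segments, the sub\nobreakdash-arc of $c$ between them would either be a chord splitting the enclosed region in two --- impossible for a minimal cycle --- or, in a degenerate placement, run outside the region, in which case one could cut the polycurve down to a strictly smaller closed polycurve, again impossible. Hence the polycurve of each $MC_i$ is a cyclic list of exactly $|B|$ segments, one per curve of $B$, and each vertex is the unique intersection point $c\cap c'$ of the two curves consecutive there.

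Next I would reduce the problem to a statement about arrangements. The open region $R_i$ enclosed by $MC_i$ is in fact a whole connected component of $\mathrm{FOV}\setminus\bigcup_{c\in B} c$: it is open, connected, disjoint from every $c\in B\subseteq C$, and by the normal form its entire boundary already lies on $\bigcup_{c\in B} c$, so it cannot be enlarged without crossing a curve of $B$. Thus $R_1,R_2$ are two faces of the arrangement of $B$ inside the FOV, each carrying all of $B$ on its boundary, and --- since minimal cycles partition the FOV --- $R_1\cap R_2=\varnothing$. It then suffices to show that this arrangement has at most one face whose boundary involves all of $B$.

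The heart of the proof, and the step I expect to be the real obstacle, is ruling this out. For each $c\in B$, $c$ carries exactly one boundary arc $a_i$ of $R_i$, delimited by the intersections of $c$ with its two cyclic neighbours in the cover. I would walk along $\partial R_1$ --- a Jordan curve --- and argue that $\partial R_2$, being assembled from arcs of the same curves, enclosing a region disjoint from $R_1$ and containing no curve of $C$, is forced to traverse those curves in the same cyclic order and along the same arcs, so that $\partial R_1=\partial R_2$; boundedness of the FOV together with the ``no floating curve'' hypothesis would then pin down which side is which and give $R_1=R_2$. An alternative formulation I might pursue is to show directly that the order in which one fixed curve $c_0\in B$ meets the other members of $B$, combined with the emptiness\nobreakdash-of\nobreakdash-curves condition, already determines the vertex set of the cycle and hence the whole polycurve. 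Either way, this is exactly where the supplement's extra conditions enter: they are needed to guarantee that consecutive\nobreakdash-on\nobreakdash-the\nobreakdash-cycle curves really do intersect and to exclude configurations --- a stray curve through the candidate region, or two faces sharing an entire boundary arc with nothing separating their exteriors --- that would otherwise produce a second face with cover $B$. By contrast, the normal form, the reduction to an arrangement face, and the partition property are all routine.
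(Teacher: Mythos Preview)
Your overall frame---assume two distinct minimal cycles $MC_1,MC_2$ share a cover $B$ and derive a contradiction---matches the paper, but almost everything after that diverges, and one preliminary step misfires. The ``normal form'' (each curve of $B$ contributes exactly one segment) is not proved in the paper; it is taken as Assumption~4. Your argument for the case where the connecting sub-arc of $c$ runs \emph{outside} the enclosed region does not work under the paper's definition: ``minimal'' here means only that no curve of $C$ meets the interior, not that no strictly smaller closed polycurve exists, so ``cutting the polycurve down'' yields no contradiction.

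More importantly, the paper never passes through arrangement faces. It splits into two cases---$MC_1$ and $MC_2$ share at least one segment, or they do not---and introduces two bespoke hypotheses to handle them. Assumption~5 (the \emph{direction assumption}) constrains, for every curve $U$ lying in both cycles, the relative order of its ``port point'' $PP(u)$ (where the connecting arc $VG(u)$ of $U$ leaves the cycle) versus its other endpoint $K(u)$, once the two cycles are given opposite orientations. Assumption~6 (the \emph{accessibility assumption}) guarantees, for cycles with no shared segment, two non-crossing artificial paths between them bounding a region free of shared curves, which reduces that case to the shared-segment one. With these in hand the paper runs an iterative ordering argument: starting from a reference intersection, it shows $PP(u_i)<K(u_i)$ for each $i$, forcing $u_0'<u_1'<\cdots<u_N'$ in $MC_2$; hence curves that are consecutive in $MC_1$ are also consecutive in $MC_2$, so they intersect twice, contradicting Assumption~1.

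Your arrangement-face reduction is a reasonable alternative viewpoint, and you correctly identify the crux as showing that at most one face carries all of $B$ on its boundary. But the paper's extra conditions are tailored to the port-point ordering machinery, not to face-counting, so they would not plug your gap as stated; conversely, the supplement exhibits an explicit configuration (violating Assumption~5) in which two minimal cycles \emph{do} share a cover, so some such hypothesis is genuinely required and your outline cannot be completed without engaging with it.
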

\begin{proof}
See supplementary material for proof.
\end{proof}
For the statement to be wrong, the same curve $c_i$ of the minimal cover $B$ would need to generate another minimal cycle. Which intuitively becomes hard under the assumption that curves are only allowed to intersect once. For lines as shown in Fig.~\ref{fig:basics} this is not possible. For general curves the proof becomes more involved and needs some further assumptions which can be found in the supplementary.

Given that we have a link between the minimal cover and minimal cycles we now focus on to relationship between the intersection orders $I$ and the minimal cycles. 
\begin{lemma}
\label{lemma2}
Let a set of curves $C_1$ and the induced intersection orders $I_1$ form the structure $T_1=(C_1, I_1)$. Applying any deformations on the curves in $C_1$, excluding removal or addition of curves, results in a new induced intersection order that creates $T_2=(C_2, I_2)$. Given these two typologies, $I_1 = I_2 \iff {MC}_1 = {MC}_2$. In other words, the global intersection order of the two structures are the same if and only if the sets of minimal cycles are the same.
%and the resulting induced intersection orders create $T_2=(C_2, I_2)$. $I_1 = I_2 \iff {MC}_1 = {MC}_2$. In other words, the global intersection orders of the two structures are same if and only if the set of minimal cycles are same.
\end{lemma}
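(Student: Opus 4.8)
The plan is to reduce the statement to a reconstruction problem for the \emph{arrangement graph} $\mathcal{A}$ of the configuration: the plane graph embedded in the FOV disk whose vertices are the intersection points $\textbf{P}$, whose edges are the curve segments $S$, and — by the very definition of a minimal cycle together with the ``no curve floats'' property — whose bounded faces are exactly the minimal cycles. The curves $C$, the order $I$, and the ambient embedding together determine $\mathcal{A}$ as an embedded (combinatorial) map, and conversely the map determines its faces. So the lemma will follow once I establish that (i) $I$ together with the curve directions determines the combinatorial embedding of $\mathcal{A}$, and (ii) the set of faces of $\mathcal{A}$ determines $I$. Implication (i) gives $I_1=I_2\Rightarrow MC_1=MC_2$ and implication (ii) gives $MC_1=MC_2\Rightarrow I_1=I_2$, which is the claimed equivalence. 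The deformation hypothesis is used only to guarantee that $C_1$ and $C_2$ are the same labelled family of curves, so that vertices and edges of $\mathcal{A}_1$ and $\mathcal{A}_2$ are canonically identified (a vertex is named by the pair of curves crossing there, an edge by a curve and two consecutive entries of its $I_c$).

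For (i), the vertices and edges of $\mathcal{A}$ are read off directly from $I$; the only missing datum is the cyclic order of the four edge-ends around each crossing vertex $p$ of two curves $c,c'$. By transversality the two ends of $c$ are cyclically opposite, and likewise for $c'$, so this order is one of exactly two possibilities, differing by the ``handedness'' of the crossing. To pin it down I would use that every curve is simple and anchored at two boundary points $s_c,t_c\in\partial$: by the Jordan curve theorem $c$ splits the disk into a left part and a right part (well defined from the direction of $c$ and the orientation of the disk), and the two arcs of $\partial\setminus\{s_c,t_c\}$ lie on opposite sides. Since $c'$ meets $c$ at most once, its endpoints lie on opposite arcs of $\partial$; which arc each lies on is exactly what $I_\partial$ records, and this forces whether $c'$ runs from the left of $c$ to the right or vice versa at $p$. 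Tracing faces in the resulting rotation system then yields the minimal cycles, so $I$ (with directions) determines $MC$.

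For (ii), I would use that a minimal cycle is an honest closed polycurve, so knowing the set $MC$ means knowing every face-boundary walk of $\mathcal{A}$ (by Lemma~\ref{lemma1_main}, even the set of covers $B$ would suffice for this). At each crossing vertex the at most four incident faces are distinct under the mild conditions, so their corners reassemble unambiguously into the cyclic order of edges around that vertex, which recovers the combinatorial embedding of $\mathcal{A}$. Walking along each curve $c$ from $s_c$ to $t_c$ in its direction and recording, at each crossing, the other curve through that vertex then outputs $I_c$ for every $c$, hence $I$. Therefore $MC$ determines $I$, and combining with (i) completes the proof.

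The main obstacle is not this skeleton but the degenerate configurations that the ``mild conditions'' of the supplement must exclude or absorb: connection vertices (curve endpoints meeting other curves rather than $\partial$) and merge/split points of varying degree, faces incident to $\partial$, possible cut vertices of $\mathcal{A}$, and ensuring $\mathcal{A}$ is connected — all needed for ``faces $=$ minimal cycles'', for the handedness argument of step (i), and for ``distinct faces around each crossing'' in step (ii). I expect isolating exactly the hypotheses that make all three hold to be the delicate part; the reconstruction arguments themselves are then routine.
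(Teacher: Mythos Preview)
Your route is genuinely different from the paper's, and considerably heavier. The paper does not pass through the arrangement graph or any rotation-system reconstruction; both directions are argued directly on closed polycurves and segments.

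For $I_1=I_2\Rightarrow MC_1=MC_2$, the paper simply observes that a minimal cycle in $T_1$ is a closed polycurve built from segments $S_c(i,j)$, and segments are named purely by consecutive entries of $I_c$; hence the identical closed polycurve exists in $T_2$. If it failed to be minimal there, some curve $C_p$ would enter its interior; being non-floating (Assumption~3), $C_p$ must cross the boundary on some segment $S_{c_q}(i,j)$, inserting an intersection strictly between positions $i$ and $j$ in $I_{c_q}$ and contradicting $I_1=I_2$. No handedness datum is ever invoked. For $MC_1=MC_2\Rightarrow I_1=I_2$, the paper uses that the minimal cycles partition the bounded FOV, so every segment lies on the boundary of some minimal cycle. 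Knowing the minimal cycles as closed polycurves therefore recovers the full set of segments, i.e.\ the full set of \emph{consecutive} intersection pairs along each curve, which reconstructs each $I_c$ directly.

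What your approach would buy, if completed, is a cleaner structural picture: the equivalence is really that $I$ and the set of faces are two encodings of the same combinatorial map. But the ``obstacle'' you flag in your last paragraph is not peripheral --- it is exactly the difficulty the paper's argument sidesteps. Your handedness step in~(i) genuinely needs both curves at a crossing to be anchored on $\partial$, whereas Assumption~3 only guarantees each endpoint lies on \emph{some} curve, so degree-three connection vertices are the generic situation and the Jordan-arc side argument does not apply at them. The paper never has to resolve local rotation at such vertices because it never tries to reconstruct the embedding; it works only with the segment combinatorics that $I$ already carries, which keeps the proof to a few lines rather than a case analysis over vertex types.
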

\begin{proof}
See supplementary material for proof.
% Let us begin by proving the forward statement. Consider a $MC$ in $G_1$ and the set of intersection points and the curves creating this $MC$. Since $G_2$ has the same $I$ as $G_1$, a closed polycurve $CC$ in $G_2$ can be formed by the same set of intersection points and curves. Assume some curve $C_p$ intersects the area of closed polycurve $CC$ and hence $CC$ is not a minimal cycle. We know that no curve is floating so $C_p$ has to intersect some other curve either in $CC$ or on $CC$. Either way, this means alteration in $I$ of at least 2 curves. 
% For the opposite direction, consider a pair of identical $MC$s (one in $G_1$ and one in $G_2$). The cycles are formed by the same curve segments by definition. The curve segments are, in turn, defined by the intersection points on their respective curves. Since we know no curve is removed or added, the identical minimal cycles in the structures are formed by the identical intersection points. Since minimal cycles create a partition of the space, each curve segment has to appear in at least one minimal cycle. Thus, the set of intersection points $I_1 = I_2$.
\end{proof}
% Let's begin by proving the forward statement. Consider a $MC$ in $G_1$ and the set of intersection points and the curves creating this $MC$. Since $G_2$ has the same $I$ as $G_1$, a closed polycurve $CC$ in $G_2$ can be formed by the same set of intersection points and curves. Assume there exists another closed polycurve $CQ$ in the area covered by $CC$ and hence $CC$ is not a minimal cycle. Since $CC$ is a closed polycurve, at least 2 curves in $CQ$ have to intersect $CC$ to create a cycle in $CC$. However, this would alter $I$. Since the same argument can be applied to all minimal cycles, the proposition holds.

% For the opposite direction, consider a pair of identical $MC$s (one in $G_1$ and one in $G_2$). The cycles are formed by the same curve segments by definition. The curve segments are, in turn, defined by the intersection points on their respective curves. Since we know no curve is removed or added, the identical minimal cycles in the structures are formed by the identical intersection points. Since minimal cycles create a partition of the space, each curve segment has to appear in at least one minimal cycle. Thus, the set of intersection points $I_1 = I_2$.

Given this equivalence between intersection orders and minimum cycle we can state our main result.
\begin{corollary}
\label{coro}
From Lemma \ref{lemma1_main} and Lemma \ref{lemma2}, given a structure $T=(C, I)$, $I$ can be uniquely described by the set of minimal covers $B$.
\end{corollary}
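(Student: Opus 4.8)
The plan is to treat the claim as an information-preservation statement: with the labelled curve set $C$ held fixed, the map sending an intersection order $I$ to its set of minimal covers $B$ is injective, so $I$ can be recovered from $B$. The easy direction is that $B$ is a function of $(C,I)$ at all --- given $I$ one forms the curve segments, extracts the minimal cycles, and reads off their covers --- so the entire content lies in the injectivity. I would establish it by a two-step chain through the two lemmas, $B \to MC \to I$, where $MC$ denotes the set of minimal cycles.

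For the first step, I invoke Lemma~\ref{lemma1_main}: each minimal cycle is uniquely identified by its minimal cover, so within any one structure the assignment (minimal cycle) $\mapsto$ (its cover) is injective, and a cover $B_k \in B$ pins down the minimal cycle it bounds. Hence $B$ and the set of minimal cycles carry exactly the same information, and equality of the sets of covers of two structures is the same as equality of their sets of minimal cycles. For the second step I invoke the ``$\Leftarrow$'' implication of Lemma~\ref{lemma2}: equal sets of minimal cycles force equal induced intersection orders. Composing the two, $B = B' \Rightarrow MC = MC' \Rightarrow I = I'$, which is precisely the statement that $I$ is uniquely described by $B$.

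The step I expect to be the real obstacle is making this identification airtight and matching it to the hypotheses of Lemma~\ref{lemma2}, which is stated for two structures related by a deformation of a common curve set (no curve added or removed) and which speaks of ``the same sets of minimal cycles''. I would therefore (i) restrict, as in the intended use case, to pairs such as a ground-truth graph and a prediction defined over the same labelled lanes, which are deformations of one another --- or argue more generally that any two admissible embeddings of a fixed finite labelled family of curves in the bounded FOV can be joined by such a deformation --- and (ii) spell out that ``same set of minimal cycles'' must be read, via Lemma~\ref{lemma1_main}, as ``same set of minimal covers'', so that the hypothesis $B = B'$ of the corollary literally matches the right-hand side of the iff in Lemma~\ref{lemma2}. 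Once these identifications are pinned down, the corollary is just the composition of the two lemmas and needs no further computation.
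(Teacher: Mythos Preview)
Your proposal is correct and follows exactly the route the paper intends: the corollary is presented in the paper without an explicit proof, simply as the composition of Lemma~\ref{lemma1_main} (minimal covers determine minimal cycles) with Lemma~\ref{lemma2} (minimal cycles determine the intersection order), which is precisely the chain $B \Rightarrow MC \Rightarrow I$ you spell out. Your additional care about the deformation hypothesis of Lemma~\ref{lemma2} and the reading of ``same minimal cycles'' via Lemma~\ref{lemma1_main} is more than the paper itself provides, but is entirely in the spirit of the intended argument.
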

The remarkable fact about Corollary \ref{coro} is that we converted a global ordering problem into a detection problem. Instead of creating a sequence for each curve, it is enough to detect minimal cycles where each minimal cycle can be represented by a one-hot vector of the curves in $T$ which shows whether a curve is in the minimal cover of the particular minimal cycle or not.

\subsection{Structural Mapping}

% Until now, we built our formulations on a given set of curves $C$. However, in reality, these curves are also estimated. 

\begin{figure*}

\centering
\includegraphics[width=.8\linewidth]{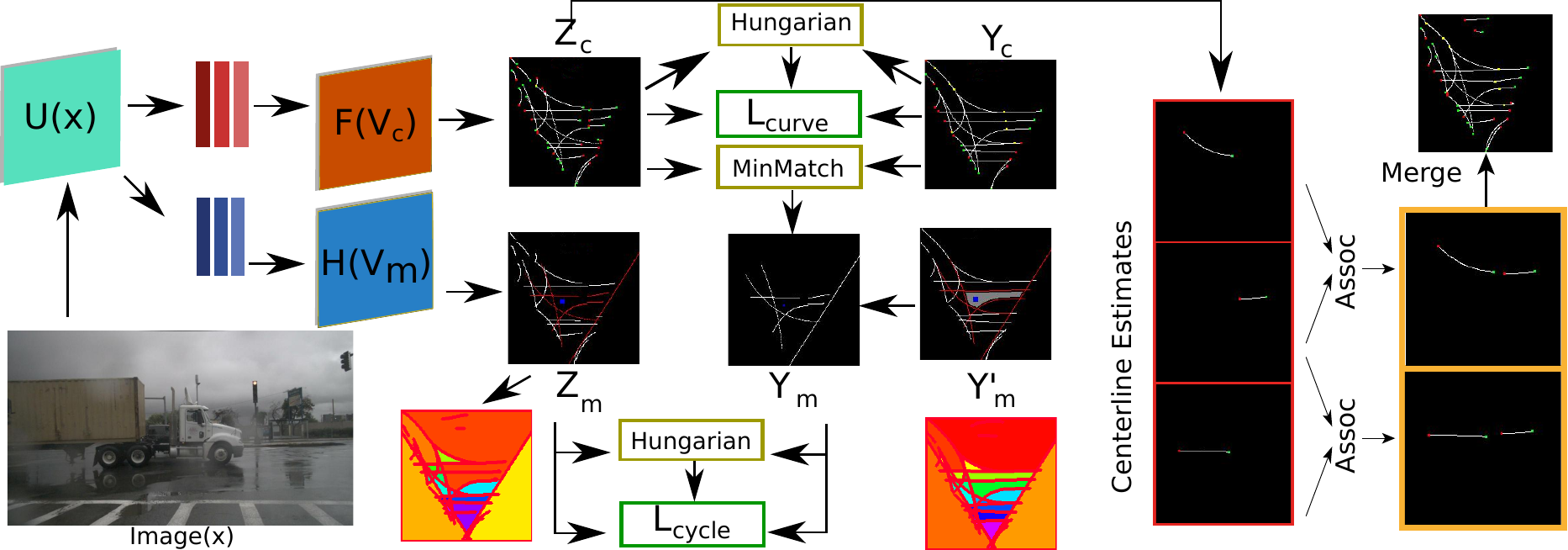}
\vspace{-1.em}
\caption{The training procedure of the proposed framework produces the centerline curve estimates as well as the minimal cycle covers. The matchings between curves and cycles provide a consistent and higher level supervision to the method, resulting in topological understanding of the scene. Connection step is applied to every pair of selected curves to estimate connectivity probability, where the connected centerlines are modified so that their corresponding endpoints coincide.
% b) Left shows an estimated lane graph with the resulting two minimal cycles shaded with light red and light green. Note that the red and golden lanes make an unlikely structure. Right shows the minimal cycles that are estimated by the MC detection subnetwork. Instead of 2, 4 minimal cycles are estimated that would result in a topologically more sensible layout. 
}
\vspace{-1.em}
\label{fig:matching}
\end{figure*}

The previous theoretical results allow us to train a deep neural network that jointly estimates the curves and the intersection orders. Therefore, we build a mapping between the curves and minimal covers for both the estimated $T_E$ and the ground truth $T_T$ structures.

Using a neural network, we predict a fixed number of curves and minimum cycles, which is larger than the real number of curves and cycles in any scene. Thus, let there be a function $U(x)$ that takes the input $x$ (a camera image in our case) and outputs two matrices, $V_c$ of size $N\times D$ which is a $D$ dimensional embedding for all $N$ curve candidates and $V_m$ of size $M\times E$, which is a $E$ dimensional embedding for all $M$ minimal cycle candidates. The two embedding matrices are each processed by a function, $F(V_c)$ and $H(V_m)$. $F(V_c)$ processes the curve candidate embedding $V_c$ and generates a matrix output $Z_c^q \in \mathbb{R}^{N\times \theta}$ containing the parameters for the $N$ curves and $Z_c^p \in \mathbb{R}^{N}$ the probability that the $i^{\text{th}}$ curve exists. $H(V_m)$ processes the minimum cycle candidate embedding, and generates three outputs each describing a property of minimum cycles. First, $Z_m^q \in \mathbb{R}^{M\times (N+K)}$ the estimated minimal cover for each of the $M$ candidates, describing the probability that one of the $N$ candidate curves and $K$ FOV boundary curves belongs to the cover. Second, $Z_m^p \in \mathbb{R}^{M}$ the probability that a candidate minimal cycle exists and finally, $Z_m^r(a)\in \mathbb{R}^{M\times 2}$ an auxiliary output estimating the centers of the candidate minimal cycles. Thus, our framework generates a set of curve and minimum cycle candidates, see Fig.~\ref{fig:matching} for an illustration.

% Let there be a function $V_c, V_m = U(x)$ that takes input $x$ (a camera image for our case) and outputs $V_c$ of size $N\times D$ which is D dimensional embeddings of N curve candidates and $V_m$ of size $M\times E$ which is E dimensional embeddings of M minimal cycle candidates. Let the curve estimation be done by some function $Z_c^q, Z_c^p = F(V_c)$ which produces outputs for each curve candidate vector $V_c^i$. Here, $Z_c^q \in \mathbb{R}^{N\times \theta} $ represents the parameters for the curve while $Z_c^p \in \mathbb{R}^{N}$ are the probability of that particular curve existing. Then, let $Z_m^q, Z_m^p, Z_m^r = H(V_m)$. $H(.)$ produces 3 outputs each of which represents a property of minimal cycle candidates. $Z_m^q \in \mathbb{R}^{M\times (N+K)}$ is the minimal covers of the $M$ candidate minimal cycles in terms of the $N$ candidate curve vectors plus $K$ fixed boundary curves that enclose the field-of-view. $Z_m^q$ represents the probability that a given curve is in a particular candidate minimal cycle or not. $Z_m^p \in \mathbb{R}^{M}$ is the probability that a candidate minimal cycle exists while $Z_m^r(a)\in \mathbb{R}^{M\times 2}$ is an auxiliary output estimating the centers of the candidate minimal cycles. 

\subsection{Training Framework}
\label{training}

The output of the network is, (i) a set of candidate curves and (ii) minimal cycles that are defined with respect to the candidate curves. In training we use Hungarian matching on the L1 difference between the control points of centerlines. However, it is more complex for the minimum cycles, where it is fundamental that the matching between the ground truth topology and the estimated topology is consistent. Let there be $N'$ true curves and $M'$ true minimal cycles with $K$ boundary curves. Similarly, $Y{'}_c^q \in \mathbb{R}^{N'\times \theta}$ represents the true curve parameters, $Y{'}_m^q \in \{0,1\}^{M'\times  (N' + K)}$ the minimal covers with respect to the true curves, and $Y_m^r$ the true centers of the minimum cycles.

% In order to match the ground truth topology and estimated topology, the estimated curves and minimal cycles should be matched with their true counterparts in a consistent fashion. Let there be $N'$ true curves and $M'$ true minimal cycles with $K$ boundary curves enclosing the area, then $Y_c^q \in \mathbb{R}^{N'\times \theta}$ represents the true curve parameters while $Y'_m^q \in \{0,1\}^{M'\times  (N' + K)}$ and $Y_m^r$ represent the true minimal cycles one-hot vector in terms of true curves and the minimal cycle centers, respectively. 

% To perform the one-to-one matching, we use Hungarian matching on the L1 difference between the control points of the true and estimated curves. Note that this matching takes the direction into consideration since changing the order of the control points result in a different loss. 

\paragraph{Min Matching.} Since a ground truth (GT) minimal cycle is defined on GT curves while detected minimal cycles are defined on estimated curves, we must first form a matching between estimated and GT curves. Using Hungarian matching is not ideal since it does not consider the fragmentation of the estimated curves. Fragmentation is the situation when several connected estimated curves represent one GT curve. Therefore, often the estimated candidate minimal cycles will have more candidate curves than their GT counterparts. Due to the one-to-one matching in the Hungarian algorithm, a long GT curve can only be matched to one short, fragmented curve, even though combining the estimated fragments would result in a closer approximation. Thus, we instead match each candidate curve to its closest GT curve. This means every candidate curve is matched to exactly one GT curve, while a GT curve can be matched to any number (including zero) of candidate curves. 

% Since a ground truth (GT) minimal cycle is defined on GT curves while detected minimal cycles are defined on estimated curves, we need to first form a matching between estimated and GT curves. Using Hungarian matching is not ideal since it does not take the fragmentation of the estimated curves into consideration. Fragmentation is the situation, when one GT curve is represented by several connected estimated curves. Therefore, often the estimated candidate minimal cycles will have more candidate curves than their GT counterparts. Due to the one-to-one matching in the Hungarian algorithm, a long true curve can only be matched to one small fragmented curve, even thought combining the estimated fragments would result in a closer approximation. Thus, we instead match each candidate curve to the GT curve it is closest to. This means every candidate curve is matched to exactly one GT curve while a GT curve can be matched to any number (including zero) of candidate curves. 

% This means using Hungarian matched curves will punish non-matched fragmented curves if we force their membership in minimal cycles to be zero. On the other hand, ignoring the non-matched curves means, most elements of $Z_m^q$ will not be supplied gradients. Thus, we instead match each candidate curve to the true curve it is closest to. This means every candidate curve is matched to exactly one true curve while a true curve can be matched to any number (including zero) of candidate curves. 

After min matching, we create a new target for minimal cycles estimates that we denote by $Y_m^q \in \{0,1\}^{M'\times (N + K)}$. An entry in $Y_m^q(i,j)$ is 1 if the GT curve to which the $j^{\text{th}}$ estimated curve is matched is in the $i^{\text{th}}$ true minimal cycle. In other words, we set all the matched estimated curves to one if their corresponding true curve is present in a minimal cycle. Given this modified GT minimal cycle label and the estimated minimal cycles, we run Hungarian matching to find the pairs used for the loss calculations. This allows a consistent training of the estimated topology.

% Note that, minimal cycle detection does not detect the minimal cycles that exist in the estimates but it rather detects the minimal cycles that should exist. Thus, the set of estimated curves in a detected MC might not even create a closed curve when the estimated curves are projected onto 2D space. Yet, the MC detection will be positive if the curves should have created a cycle to produce topologically plausible lane graph, see Fig~\ref{fig:matching}.  

\begin{figure*}
%\begin{subfigure}{0.5\textwidth}
    \centering
    \includegraphics[width=.9\linewidth]{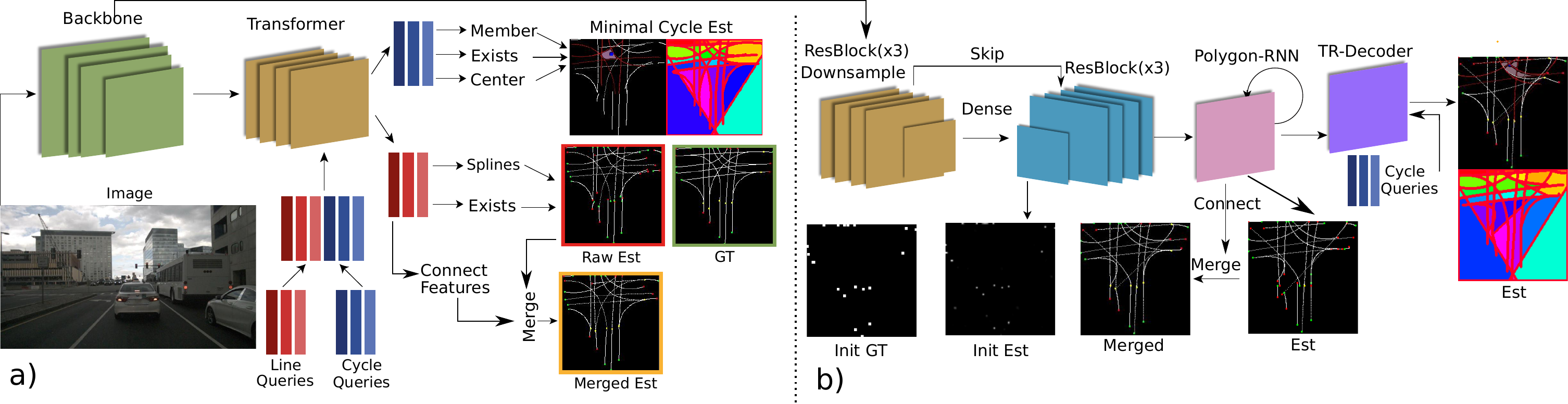}
    \caption{The proposed networks: (a) Minimal cycle transformer (\textbf{Ours/TR}) and (b) Minimal cycle polygon RNN (\textbf{Ours/PRNN}). 
    {Ours/TR} processes two sets of queries (curve and minimal cycle) jointly to produce corresponding feature vectors. These vectors are then fed to MLPs for final estimations.  b) Ours/PRNN  has three parts: 1) Initial point estimation, 2) Polygon-RNN that outputs the subsequent control points of a curve given the initial points, and   3) minimal cycle decoder. 
    %that is a transformer decoder which processes the cycle queries using the Polygon-RNN final state feature maps as the memory. This way cycle queries attend all the estimated curves.
    }
    \label{fig:transformer}
    \vspace{-1.5em}
%\end{subfigure}    
\end{figure*}

For the connectivity of the curves, we explicitly estimate the incidence matrix $A$ of $V(G,E)$ in our network. This is done by extracting a feature vector for each candidate centerline and building a classifier $\hat{A}(C_i, C_j)$ that takes two feature vectors belonging to curves $C_i$ and $C_j$ and outputs the probability of their association. The training uses the Hungarian matched curves to establish the correct order. The estimated incidence matrix allows a \textbf{merging} post-processing step during test time, where the endpoints of the curves are modified, so that connected curves coincide.

The centerline spline control points and minimal cycle centers are trained with an $L_1$ loss, while we utilize the binary cross-entropy for centerline and minimal cycle probability. We also use the binary cross-entropy for the membership loss of minimal cycles, i.e. between $Z_m^q$ and $Y_m^q$ and for the connectivity. The total loss then becomes $L = L_{curve} + \alpha L_{cycle}$, where $L_{curve} = L_{splines} + \beta_eL_{exists} + \beta_cL_{connect}$, and $L_{cycle} = L_{member} + \beta_dL_{exists} + \beta_fL_{center}$, with $\alpha$ and $\beta_x$ hyperparameters.

% \begin{figure}
% \begin{subfigure} {0.55\textwidth}   
%     \centering
%     \includegraphics[height=2.8cm]{}
%     \caption{The training procedure of the proposed framework produces the centerline curve estimates as well as the minimal cycles. The matchings between curves and cycles provide a consistent and higher level supervision to the method, resulting in topological understanding of the scene.
%     % b) Left shows an estimated lane graph with the resulting two minimal cycles shaded with light red and light green. Note that the red and golden lanes make an unlikely structure. Right shows the minimal cycles that are estimated by the MC detection subnetwork. Instead of 2, 4 minimal cycles are estimated that would result in a topologically more sensible layout. 
%     }
%     \end{subfigure}    
% \begin{subfigure}{0.4\textwidth}
%     \centering
%     \includegraphics[height=2.8cm]{}
%     \caption{Transformer based architecture processes 2 sets of query vectors (curve and minimal cycle queries) jointly to produces corresponding feature vectors. These vectors are fed to MLPs to produce final estimations. Association step is applied to every pair of selected curves to estimate connectivity probability where the connected centerlines are modified so that their corresponding endpoints coincide.}
%     \label{fig:transformer}
% \end{subfigure}    
%     \label{fig:matching}
% \end{figure}

% \clearpage
\section{Network Architectures}

Following \cite{Can_2021_ICCV}, we focus on two different architectures to validate the impact of our formulation. The first architecture is based on transformers \cite{DBLP:conf/eccv/CarionMSUKZ20} while the second approach is based on Polygon-RNN \cite{DBLP:conf/cvpr/AcunaLKF18}.

\subsection{Transformer}
We modify the transformer-based architecture proposed in \cite{Can_2021_ICCV}. We use two types of learned query vectors: centerline (curve) and cycle queries. We concatenate centerline and cycle queries before being processed by the transformer. Therefore, curves and cycles are jointly estimated. The transformer outputs the processed queries that correspond to $V_c$ and $V_m$ in our formulation. Finally, we pass these vectors through two-layer MLPs to produce the estimates $Z_c$ and $Z_m$. The overview is given in Fig~\ref{fig:transformer}. Note that the addition of the MC formulation adds negligible parameters since the number of parameters in the transformer is fixed. We call the transformer model with MC, Ours/TR.

% We call the transformer model without MC, TR-Base and MC-TR is our full version with MC.

As a baseline, we added an RNN on the base transformer to estimate the order of intersections directly and provide supervision to the network. The RNN processes each centerline query output from the transformer independently and generates an $N+K+1$ dimensional vector at each time step that represents the probability distribution of intersecting one of the $N+K$ curves and one `end' token. The RNN is supervised by the true intersection orders converted to estimate centerlines through Hungarian matching. We named this method TR-RNN, see Suppl. Mat. for details. 

% The first of our two investigated networks is based on transformers \cite{DBLP:conf/eccv/CarionMSUKZ20}. 

\subsection{Polygon-RNN}
The second network is based on Polygon-RNN \cite{DBLP:conf/cvpr/AcunaLKF18} and is similar to \cite{DBLP:conf/cvpr/HomayounfarMLU18}, where the authors generate lane boundaries from point clouds. We adapt \cite{DBLP:conf/cvpr/HomayounfarMLU18} to work with images and to output centerlines rather than lane boundaries. Following \cite{Can_2021_ICCV}, we use a fully connected sub-network that takes $V_c$ as input and outputs a grid. Each element represents the probability of an initial curve point of a curve starting at that location, i.e. $Z_c^p$. 

Given the initial locations and the backbone features, Polygon-RNN \cite{DBLP:conf/cvpr/AcunaLKF18} produces the next control points of the centerline. We fix the number of iterations of Polygon-RNN to the number of spline coefficients used to encode centerlines. The approach described so far forms the base Polygon-RNN. With Polygon-RNN producing $Z_c^q$, we add a transformer decoder to the architecture to detect the minimal cycles. We use a set of minimal cycle queries similar to our transformer architecture, where the queries are processed with final feature maps of Polygon-RNN.
Therefore, in the transformer decoder, the query vectors attend the whole set of estimated centerlines to extract the minimal cycle candidates. For this process, we pad the RNN states to a fixed size and add positional encoding. This ensures that the decoder receives the information regarding the identity of the curves. The processed query vectors are passed to the same MLPs as in the transformer architecture to produce the set of minimal cycle estimates $Z_m$. Fig~\ref{fig:transformer} outlines this approach, which we call Ours/PRNN. Different to the transformer based method, this is a two stage process, where first the centerlines are estimated and then the minimal cycles.

\begin{figure*}
\vspace{-1em}
    \centering
    \includegraphics[width=.9\linewidth]{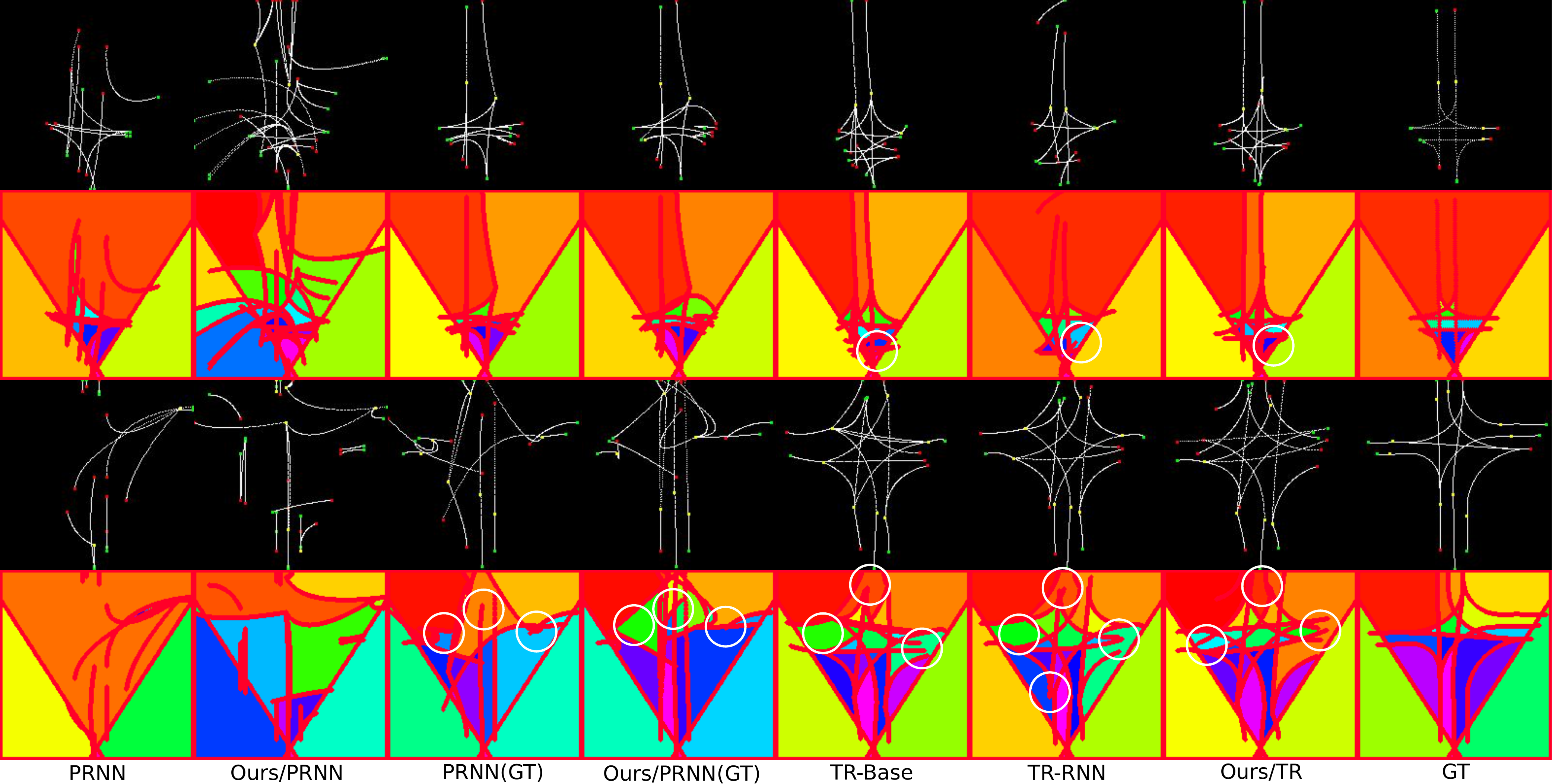}
    \vspace{-1em}
    \caption{Visual results of the different methods compared, with input and GT, on Argoverse (Top) and Nuscenes (Bottom) datasets.}
    \label{fig:visual_compare}
    \vspace{-1em}
    
\end{figure*}

% To achieve this, we backproject backbone features of the image to BEV with the given projection matrix. We concatenate xy grid locations with this backbone feature map, similar to \cite{DBLP:conf/nips/LiuLMSFSY18}.
% \begin{figure}
%     \centering
%     \includegraphics[width=12cm]{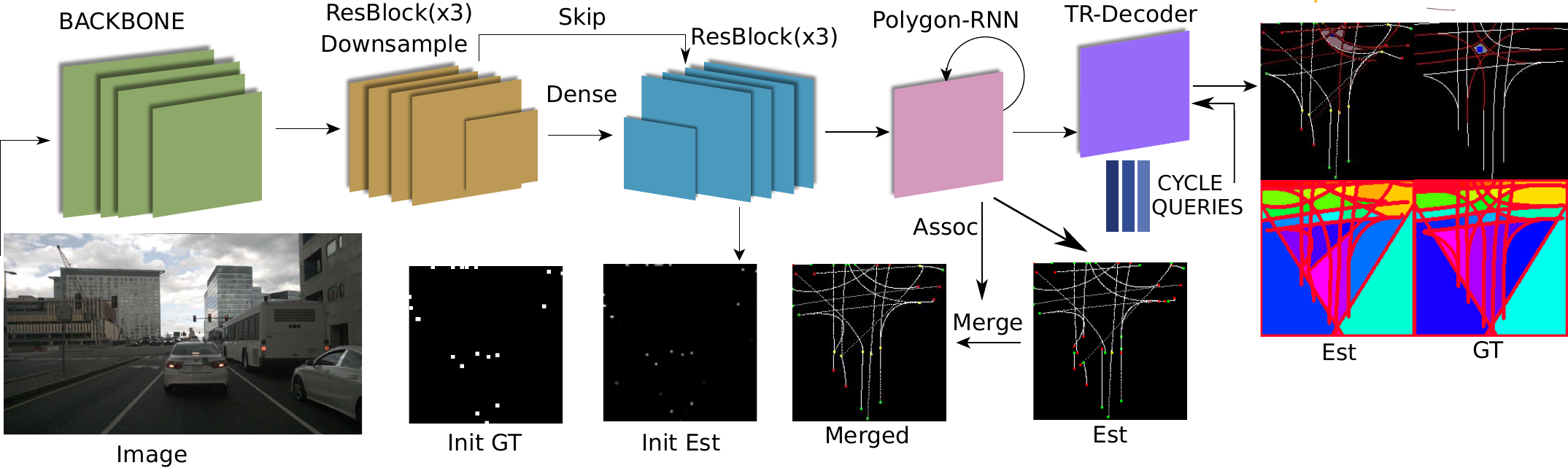}
%     \caption{PolygonRNN based architecture has 3 main parts: 1) Initial point estimation, 2) Polygon-RNN that outputs the subsequent control points of a curve given the initial points and 3) minimal cycle decoder that is a transformer decoder which processes the cycle queries using the Polygon-RNN final state feature maps as the memory. This way cycle queries attend all the estimated curves.}
%     \label{fig:polyrnn}
% \end{figure}
    
% \subsection{Comparison}

% The selected architectures have a significant difference in terms of handling of minimal cycles. In transformer based method, the minimal cycle detection and curve estimation are done jointly. Thus, the curve estimation and minimal cycle detection effect each other while in Polygon-RNN based method, minimal cycle detection happens later than curve estimation and only effects curve estimation in training through gradients. We wanted to experiment with the effect of joint estimation versus posterior minimal cycle detection.
    
\section{Metrics}\label{sec:metrices}
Several metrics were proposed in \cite{Can_2021_ICCV} to measure accuracy in estimating the lane graph. They are \emph{M-F-Score}, \emph{Detection ratio}, and \emph{Connectivity}. These metrics do not cover the topology of the road network. Thus, we propose two new metrics that capture the accuracy in estimating topology. The proposed metrics complement the existing lane graph metrics to give a full picture of the accuracy of the estimated road network.

\underline{\emph{Minimal-Cycle Minimal Cover ($B$).}}
We measure the minimal cycle accuracy in 2 inputs. First, minimal cycles are extracted from the estimations. We use the procedure of Section~\ref{training} to obtain $Y_m^q$. 
Then, these cycles are matched using Hungarian matching to calculate true positives, true negatives, and false positives. This metric is referred to as \textbf{MC-F}. % and it is applicable to all the methods regardless of them having a minimal cycle detection subnetwork. 
We also measure the accuracy of the minimal cycle network. Similar to MC-F, first $Y_m^q$ is obtained, and second we threshold $Z_m^p$ to obtain the detected $Z_m^q$. Then, we apply Hungarian matching and calculate statistics on matched cycles. We call this metric \textbf{H-GT-F}, which is applicable only if the minimal cycles are detected. \textbf{H-GT-F} measures the MC-network's performance in estimating the true cycles in the true topology. 
Finally, \textbf{H-EST-F} measures the MC-head's performance in detecting the \emph{estimated} cycles.
% The MC head's performance in detecting the estimated cycles is measured by \textbf{HEST-F}. 
Since the extracted MCs and the MC head estimations are with respect to estimated curves, we directly run Hungarian matching on the extracted and estimated MCs. \\
%For both MC-F and HGT-F, for the true curves with no matches, we add every appearance of the curve in a minimal cycle as a false negative.\\
\underline{\emph{Intersection Order ($I$ of $G(C, I)$).}}
%The primary property of the true structure $G(C, I)$ we want to preserve in our estimation is the order of intersections, i.e. $I$.
To measure the performance of the methods in preserving the intersection order, we start with min-matching. Then for each true curve, we select the closest matched estimate. For a given true curve $C_i$, let the matched curve be $S_i$.
%then the best match curve $R_i = \argmin_{s} \text{L1}(C_i, s), s\in S_i$. For the set of ${C_i}$ with $|S_i| > 0$, given the matched pairs,
We extract the order of intersections from both $C_i$ and $S_i$ and apply the Levenshtein edit distance between them. The distance is then normalized by the number of intersections of the true curve. We refer to this metric as \textbf{I-Order}.

\section{Experiments}

We use NuScenes \cite{nuscenes2019} and Argoverse \cite{DBLP:conf/cvpr/ChangLSSBHW0LRH19} datasets.  Both datasets provide HD-Maps in the form of centerlines. We convert the world coordinates of the centerlines, to the camera coordinate system of the current frame, then resample these points with the target BEV resolution and discard any point that is outside the region-of-interest (ROI). The points are then normalized with the ROI bounds $[0,1]^2$. We extract the control points of the Bezier curve for this normalized coordinate system. The ground truth and the estimations of the method are also represented in the same coordinate system. We use the same train/val split proposed in \cite{DBLP:conf/cvpr/RoddickC20}.

\noindent\textbf{Implementation.}
We use images of size 448x800 and the target BEV area is from -25 to 25m in x-direction and 1 to 50m in z direction with a 25cm resolution. Due to the limited complexity of the centerlines, three Bezier control points are used. We use two sets of 100 query vectors for centerlines and minimal cycles: one for right (Boston \& Argoverse) and one for left sided traffic (Singapore). The backbone network is Deeplab v3+ \cite{DBLP:conf/eccv/ChenZPSA18} pretrained on the Cityscapes dataset \cite{Cordts2016Cityscapes}. Our implementation is in Pytorch and runs with 11FPS without batching and including all association steps. 
When training Polygon-RNN, we use true initial points for training of the RNN, following \cite{DBLP:conf/cvpr/HomayounfarMLU18}. To train the initial point subnetwork, we use focal loss \cite{DBLP:journals/pami/LinGGHD20}.

\noindent\textbf{Baselines.}
We compare against state-of-the-art~\textbf{transformer} and~\textbf{Polygon-RNN} based methods proposed in \cite{Can_2021_ICCV} as well as another baseline which uses the method~\textbf{PINET}~\cite{DBLP:journals/corr/abs-2002-06604} to extract lane boundaries. The extracted lane boundaries are then projected onto the BEV using the ground truth transformation. We then couple pairs of lane boundaries and extract the centerlines using splines. This baseline is not evaluated for connectivity.

% Nuscenes dataset consists of 1k sequences recorded in Boston and Singapore with right-sided and left-sided traffic flow, respectively. Argoverse dataset has 89 scenes in Pittsburgh and Miami. Dataset is divided into 65 training and 24 validation sequences.
% Thresholds for curve and cycle detection are set to 0.3 and 0.5, respectively, for all methods.  
\section{Results}

\begin{figure*}
    \centering
    \includegraphics[width=.9\linewidth]{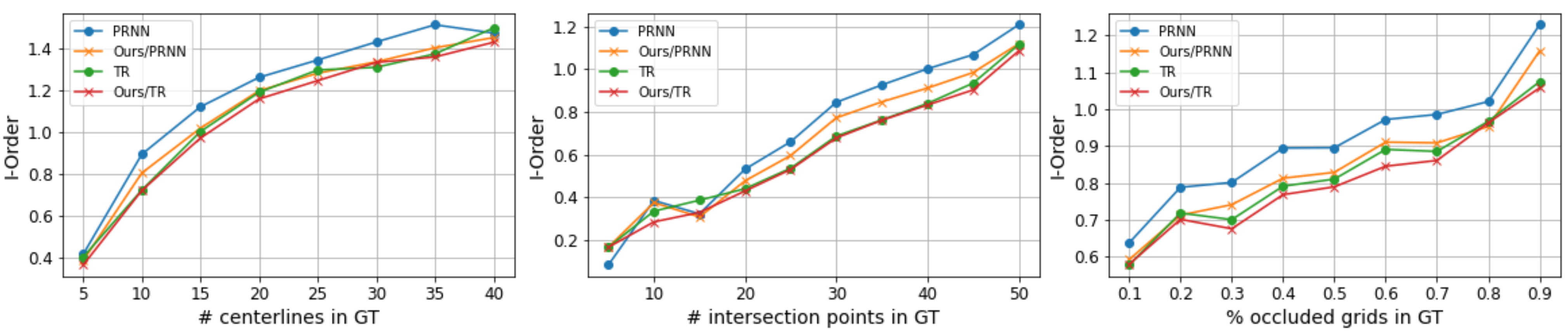}
    \caption{The I-Order measures (lower is better) with number of centerlines, intersection points, and occlusion in scene. The x-axis measures reflect the complexity of scenes for the visual understanding.  }
    \label{fig:sceneComplexity}
    \vspace{-1em}
\end{figure*}

\begin{table}[h]
% \scriptsize{
\begin{center}
\scriptsize{
\tabcolsep=0.08cm
\begin{tabular}{ |c|c|c|c|c|c|c|c|c|c|c|c|c|c|c|c|c|c| }
\hline
\textbf{Method} & M-F &  Detect & C-F & MC-F &  I-Order ($\downarrow$)& RNN-Order ($\downarrow$)\\
\hline
\hline
PINET~\cite{DBLP:journals/corr/abs-2002-06604}  & 49.5 & 19.2 & - & 14.7 &   1.08 & - \\
\hline
PRNN & 52.9 & 40.5 & 24.5 &  45.9 &  0.894& -\\

Ours/PRNN & 51.7 & 53.1 & 49.9& 53.2 & 0.824 & - \\
\hline
TR \cite{Can_2021_ICCV}& 56.7  & 59.9 & 55.2 & 62.0 & 0.800& - \\

TR-RNN & 58.0 & 59.7 &  52.3 & 60.3 & 0.791 & 0.939 \\

\textbf{Ours/TR} &  \textbf{58.2} & \textbf{60.2} & \textbf{55.3} & \textbf{62.5} &  \textbf{0.776}& -  \\

\hline
\end{tabular}
}
\end{center}
% }
\vspace{-2em}
\caption{Results on NuScenes. See Section~\ref{sec:metrices} for the metrics.}
\vspace{-0.5em}
\label{tab:nusc}
\end{table}

We report quantitative comparison with SOTA on the Nuscenes dataset in Tab.~\ref{tab:nusc}. The proposed formulation provides substantial boost in almost every metric for Polygon-RNN based methods. Compared to transformer-based methods proposed in \cite{Can_2021_ICCV}, our method performs better in all metrics. We also validate our method in Argoverse dataset in Tab.~\ref{tab:argo}. It can be seen that our method consistently outperforms the competitors. 

We also report the results of Polygon-RNN based method given the true initial centerline points in Tab.~\ref{tab:polygt}. Note that Polygon-RNN and Polygon-RNN(GT) are the same models with the same parameters, with the only difference being true or estimated initial points. The results show that our formulation is applicable in significantly different architectures and in different settings.

\begin{table}[h]
% \scriptsize{
\begin{center}
\scriptsize{
\tabcolsep=0.08cm
\begin{tabular}{ |c|c|c|c|c|c|c|c|c|c|c|c|c|c|c|c|c|c| }
\hline
\textbf{Method} & M-F &  Detect & C-F & MC-F &  I-Order ($\downarrow$)& RNN-Order ($\downarrow$)\\
\hline
\hline
PINET~\cite{DBLP:journals/corr/abs-2002-06604}  & 47.2 & 15.1 & - & 24.5&   1.23 & - \\
\hline
PRNN & 45.1 & 40.2 & 31.3 & 42.8  &   1.09 & - \\

Ours/PRNN & 44.3 & 55.4 & 46.4& 49.5  & 1.05 & -  \\
\hline
% TR\cite{Can_2021_ICCV} & 50.7 & 56.0 & 47.7 & 53.0  &   0.996 & - \\
TR\cite{Can_2021_ICCV} & 55.6 & 60.1 & 54.9 & 57.4  &   0.893 & - \\

TR-RNN & 55.8 & 53.6 & 54.4 &  52.2 &   0.953 & 0.951  \\

\textbf{Ours/TR} &  \textbf{57.1} & \textbf{64.2} & \textbf{58.1} & \textbf{58.5}  & \textbf{0.883}& -  \\

\hline
\end{tabular}
}
\end{center}
% }
\vspace{-2em}
\caption{Results on Argoverse}
\vspace{-1em}
\label{tab:argo}
\end{table}

We provide evaluation of the proposed minimal cycle branch in Tab.~\ref{tab:mc_performance}. \textbf{H-GT-F} results in both datasets indicate that the transformer-based method is better in detecting true minimal cycles, hence estimating the true topology. Moreover, from \textbf{H-EST-F} results, it can be seen that the transformer-based method is more self-aware of the resulting road network estimate. Same conclusion can be drawn from the similarity between transformer's \textbf{H-GT-F} and \textbf{MC-F} values. This implies that the method outputs centerline estimates in consistence with its topological estimate. These results are expected since the transformer jointly estimates the centerlines and the minimal cycles while the Polygon-RNN output is staged.

\begin{table}[h]
% \scriptsize{
\begin{center}
\scriptsize{
\tabcolsep=0.08cm
\begin{tabular}{ |c|c|c|c|c|c|c|c|c|c|c|c|c|c|c|c|c|c|c| }
\hline
\textbf{Dataset} & \textbf{Method} & M-F &  Detect & C-F & MC-F &  I-Order ($\downarrow$)\\
\hline

\multirow{2}{*}{\textbf{Nuscenes}} & PRNN(GT) & 71.1 & 76.4 &  52.9 &   66.9 & 0.645  \\
&Ours/PRNN(GT) & \textbf{72.6}& \textbf{77.2} & \textbf{55.0} &   \textbf{67.5}  & \textbf{0.642} \\
\hline
\multirow{2}{*}{\textbf{Argoverse}} & PRNN(GT) & 75.0 & 73.6 &  54.1 &   61.0 & \textbf{0.830} \\
& Ours/PRNN(GT) & \textbf{76.1} &\textbf{74.2}& \textbf{54.5} & \textbf{61.4}  &  0.844\\
\hline
\end{tabular}
}
\end{center}
% }
\vspace{-2em}
\caption{Polygon-RNN with GT initial points results}
\vspace{-0.5em}
\label{tab:polygt}
\end{table}

An important observation is that the MC metrics show a clear correlation with I-Order, empirically proving the equivalence of MC covers and intersection orders. We observe that the TR-RNN method's direct order estimations are far from its real achieved edit distances. This indicates that recursive estimation of intersections is not as as accurate as our minimal cycle based formulation

The performance of different methods with increasing scene complexity in the Nuscenes dataset is reported in Fig.~\ref{fig:sceneComplexity}. As expected, the performance of all methods deteriorates with an increased number of centerlines, intersection points, and scene occlusions. Nevertheless, the proposed MC-based methods consistently produce better I-Order over the baselines. Some qualitative examples of the compared methods are shown in Fig.~\ref{fig:visual_compare}, where methods that use the MC branch are preferable again.

\begin{table}[h]
% \scriptsize{
\begin{center}
\scriptsize{
\tabcolsep=0.08cm
\begin{tabular}{ |c|c|c|c|c| }
\hline
\textbf{Method} & \multicolumn{2}{|c|}{NuScenes} &  \multicolumn{2}{|c|}{Argoverse} \\
\hline
&  H-GT-F & H-EST-F &  H-GT-F & H-EST-F  \\
\hline
\hline

Ours/PRNN & 42.5 & 45.1 & 36.5 & 36.9  \\
Ours/PRNN(GT) &  51.0 & 55.9 & 44.8 & 49.3 \\
\textbf{Ours/TR} &   \textbf{60.9} & \textbf{73.0}  & \textbf{56.6} & \textbf{61.5} \\
\hline
\end{tabular}
}
\end{center}
% }
\vspace{-2em}
\caption{ Performance of minimal cycle estimation head.}
\vspace{-0.5em}
\label{tab:mc_performance}
\end{table}

\section{Conclusion}
We studied local road network extraction from a single onboard camera image. To encourage topological consistency, we formulated the minimal cycle matching strategy by means of matching only their covers. Our formulation is then used to derive losses, to train neural networks of two different architectures, namely Transformer and Poly-RNN. Both architectures demonstrated the importance of the proposed MC branch, and thus the formulated loss function, on two commonly used benchmark datasets. The proposed formulation, and the method, have the potential to be used in many other computer vision problems which require topologically consistent outputs, for example, indoor room layout estimation or scene parsing.   

% \section{Limitations and Broader Impact}
% The current method makes consideration only about the road networks, but not the other entities present on the road. The performance of our model is expected to deteriorate under adverse conditions. The propose method enables the scalability of
% autonomous driving to operate in geographically restricted areas. However, we do not consider our model as a finished product. Instead, a method has been proposed. The application of this method will largely depend upon the learning that can be derived from the training data. The generalizability of the method needs to be tested before putting it into use. The graph based structure of the road networks may not be optimal in all the scenarios.

\noindent\textbf{Limitations.}
The theoretical assumptions are mild for most modern road networks. The extraction of minimal cycles for training is time consuming and should be done offline.

%%%%%%%%% REFERENCES
{\small
\bibliographystyle{ieee_fullname}
\bibliography{egbib}

\begin{thebibliography}{10}\itemsep=-1pt

\bibitem{DBLP:conf/cvpr/AcunaLKF18}
David Acuna, Huan Ling, Amlan Kar, and Sanja Fidler.
\newblock Efficient interactive annotation of segmentation datasets with
  polygon-rnn++.
\newblock In {\em 2018 {IEEE} Conference on Computer Vision and Pattern
  Recognition, {CVPR} 2018, Salt Lake City, UT, USA, June 18-22, 2018}, pages
  859--868. {IEEE} Computer Society, 2018.

\bibitem{auclair1999survey}
M-F Auclair-Fortier, Djemel Ziou, Costas Armenakis, and Shengrui Wang.
\newblock Survey of work on road extraction in aerial and satellite images.
\newblock 1999.

\bibitem{DBLP:conf/rss/BansalKO19}
Mayank Bansal, Alex Krizhevsky, and Abhijit~S. Ogale.
\newblock Chauffeurnet: Learning to drive by imitating the best and
  synthesizing the worst.
\newblock In Antonio Bicchi, Hadas Kress{-}Gazit, and Seth Hutchinson, editors,
  {\em Robotics: Science and Systems XV, University of Freiburg, Freiburg im
  Breisgau, Germany, June 22-26, 2019}, 2019.

\bibitem{batra2019improved}
Anil Batra, Suriya Singh, Guan Pang, Saikat Basu, CV Jawahar, and Manohar
  Paluri.
\newblock Improved road connectivity by joint learning of orientation and
  segmentation.
\newblock In {\em Proceedings of the IEEE/CVF Conference on Computer Vision and
  Pattern Recognition}, pages 10385--10393, 2019.

\bibitem{nuscenes2019}
Holger Caesar, Varun Bankiti, Alex~H. Lang, Sourabh Vora, Venice~Erin Liong,
  Qiang Xu, Anush Krishnan, Yu Pan, Giancarlo Baldan, and Oscar Beijbom.
\newblock nuscenes: A multimodal dataset for autonomous driving.
\newblock {\em arXiv preprint arXiv:1903.11027}, 2019.

\bibitem{Can_2021_ICCV}
Yigit~Baran Can, Alexander Liniger, Danda~Pani Paudel, and Luc Van~Gool.
\newblock Structured bird's-eye-view traffic scene understanding from onboard
  images.
\newblock In {\em Proceedings of the IEEE/CVF International Conference on
  Computer Vision (ICCV)}, pages 15661--15670, October 2021.

\bibitem{can2020understanding}
Yigit~Baran Can, Alexander Liniger, Ozan Unal, Danda Paudel, and Luc Van~Gool.
\newblock Understanding bird's-eye view semantic hd-maps using an onboard
  monocular camera.
\newblock {\em arXiv preprint arXiv:2012.03040}, 2020.

\bibitem{DBLP:conf/eccv/CarionMSUKZ20}
Nicolas Carion, Francisco Massa, Gabriel Synnaeve, Nicolas Usunier, Alexander
  Kirillov, and Sergey Zagoruyko.
\newblock End-to-end object detection with transformers.
\newblock In Andrea Vedaldi, Horst Bischof, Thomas Brox, and Jan{-}Michael
  Frahm, editors, {\em Computer Vision - {ECCV} 2020 - 16th European
  Conference, Glasgow, UK, August 23-28, 2020, Proceedings, Part {I}}, volume
  12346 of {\em Lecture Notes in Computer Science}, pages 213--229. Springer,
  2020.

\bibitem{casas2018intentnet}
Sergio Casas, Wenjie Luo, and Raquel Urtasun.
\newblock Intentnet: Learning to predict intention from raw sensor data.
\newblock In {\em Conference on Robot Learning}, pages 947--956, 2018.

\bibitem{casas2021mp3}
Sergio Casas, Abbas Sadat, and Raquel Urtasun.
\newblock Mp3: A unified model to map, perceive, predict and plan.
\newblock {\em arXiv preprint arXiv:2101.06806}, 2021.

\bibitem{DBLP:conf/cvpr/ChangLSSBHW0LRH19}
Ming{-}Fang Chang, John Lambert, Patsorn Sangkloy, Jagjeet Singh, Slawomir Bak,
  Andrew Hartnett, De Wang, Peter Carr, Simon Lucey, Deva Ramanan, and James
  Hays.
\newblock Argoverse: 3d tracking and forecasting with rich maps.
\newblock In {\em {IEEE} Conference on Computer Vision and Pattern Recognition,
  {CVPR} 2019, Long Beach, CA, USA, June 16-20, 2019}, pages 8748--8757.
  Computer Vision Foundation / {IEEE}, 2019.

\bibitem{chen2020learning}
Dian Chen, Brady Zhou, Vladlen Koltun, and Philipp Kr{\"a}henb{\"u}hl.
\newblock Learning by cheating.
\newblock In {\em Conference on Robot Learning (CoRL)}, 2020.

\bibitem{DBLP:conf/eccv/ChenZPSA18}
Liang{-}Chieh Chen, Yukun Zhu, George Papandreou, Florian Schroff, and Hartwig
  Adam.
\newblock Encoder-decoder with atrous separable convolution for semantic image
  segmentation.
\newblock In Vittorio Ferrari, Martial Hebert, Cristian Sminchisescu, and Yair
  Weiss, editors, {\em Computer Vision - {ECCV} 2018 - 15th European
  Conference, Munich, Germany, September 8-14, 2018, Proceedings, Part {VII}},
  volume 11211 of {\em Lecture Notes in Computer Science}, pages 833--851.
  Springer, 2018.

\bibitem{Cordts2016Cityscapes}
Marius Cordts, Mohamed Omran, Sebastian Ramos, Timo Rehfeld, Markus Enzweiler,
  Rodrigo Benenson, Uwe Franke, Stefan Roth, and Bernt Schiele.
\newblock The cityscapes dataset for semantic urban scene understanding.
\newblock In {\em Proc. of the IEEE Conference on Computer Vision and Pattern
  Recognition (CVPR)}, 2016.

\bibitem{cui2019multimodal}
Henggang Cui, Vladan Radosavljevic, Fang-Chieh Chou, Tsung-Han Lin, Thi Nguyen,
  Tzu-Kuo Huang, Jeff Schneider, and Nemanja Djuric.
\newblock Multimodal trajectory predictions for autonomous driving using deep
  convolutional networks.
\newblock In {\em International Conference on Robotics and Automation (ICRA)},
  pages 2090--2096, 2019.

\bibitem{DBLP:journals/corr/abs-2011-01535}
Netalee Efrat, Max Bluvstein, Shaul Oron, Dan Levi, Noa Garnett, and Bat~El
  Shlomo.
\newblock 3d-lanenet+: Anchor free lane detection using a semi-local
  representation.
\newblock {\em CoRR}, abs/2011.01535, 2020.

\bibitem{DBLP:conf/iccvw/GansbekeBNPG19}
Wouter~Van Gansbeke, Bert~De Brabandere, Davy Neven, Marc Proesmans, and
  Luc~Van Gool.
\newblock End-to-end lane detection through differentiable least-squares
  fitting.
\newblock In {\em 2019 {IEEE/CVF} International Conference on Computer Vision
  Workshops, {ICCV} Workshops 2019, Seoul, Korea (South), October 27-28, 2019},
  pages 905--913. {IEEE}, 2019.

\bibitem{DBLP:conf/iccv/GarnettCPLL19}
Noa Garnett, Rafi Cohen, Tomer Pe'er, Roee Lahav, and Dan Levi.
\newblock 3d-lanenet: End-to-end 3d multiple lane detection.
\newblock In {\em 2019 {IEEE/CVF} International Conference on Computer Vision,
  {ICCV} 2019, Seoul, Korea (South), October 27 - November 2, 2019}, pages
  2921--2930. {IEEE}, 2019.

\bibitem{hendy2020fishing}
Noureldin Hendy, Cooper Sloan, Feng Tian, Pengfei Duan, Nick Charchut, Yuesong
  Xie, Chuang Wang, and James Philbin.
\newblock Fishing net: Future inference of semantic heatmaps in grids.
\newblock {\em arXiv preprint arXiv:2006.09917}, 2020.

\bibitem{DBLP:conf/cvpr/HomayounfarMLU18}
Namdar Homayounfar, Wei{-}Chiu Ma, Shrinidhi~Kowshika Lakshmikanth, and Raquel
  Urtasun.
\newblock Hierarchical recurrent attention networks for structured online maps.
\newblock In {\em 2018 {IEEE} Conference on Computer Vision and Pattern
  Recognition, {CVPR} 2018, Salt Lake City, UT, USA, June 18-22, 2018}, pages
  3417--3426. {IEEE} Computer Society, 2018.

\bibitem{homayounfar2018hierarchical}
Namdar Homayounfar, Wei-Chiu Ma, Shrinidhi~Kowshika Lakshmikanth, and Raquel
  Urtasun.
\newblock Hierarchical recurrent attention networks for structured online maps.
\newblock In {\em Proceedings of the IEEE Conference on Computer Vision and
  Pattern Recognition}, pages 3417--3426, 2018.

\bibitem{homayounfar2019dagmapper}
Namdar Homayounfar, Wei-Chiu Ma, Justin Liang, Xinyu Wu, Jack Fan, and Raquel
  Urtasun.
\newblock Dagmapper: Learning to map by discovering lane topology.
\newblock In {\em Proceedings of the IEEE/CVF International Conference on
  Computer Vision}, pages 2911--2920, 2019.

\bibitem{hong2019rules}
Joey Hong, Benjamin Sapp, and James Philbin.
\newblock Rules of the road: Predicting driving behavior with a convolutional
  model of semantic interactions.
\newblock In {\em Proceedings of the IEEE Conference on Computer Vision and
  Pattern Recognition}, pages 8454--8462, 2019.

\bibitem{jaritz20202d}
Maximilian Jaritz.
\newblock {\em 2D-3D scene understanding for autonomous driving}.
\newblock PhD thesis, PSL Research University, 2020.

\bibitem{DBLP:journals/corr/abs-2002-06604}
YeongMin Ko, Jiwon Jun, Donghwuy Ko, and Moongu Jeon.
\newblock Key points estimation and point instance segmentation approach for
  lane detection.
\newblock {\em CoRR}, abs/2002.06604, 2020.

\bibitem{liang2019convolutional}
Justin Liang, Namdar Homayounfar, Wei-Chiu Ma, Shenlong Wang, and Raquel
  Urtasun.
\newblock Convolutional recurrent network for road boundary extraction.
\newblock In {\em Proceedings of the IEEE/CVF Conference on Computer Vision and
  Pattern Recognition}, pages 9512--9521, 2019.

\bibitem{liang2018end}
Justin Liang and Raquel Urtasun.
\newblock End-to-end deep structured models for drawing crosswalks.
\newblock In {\em Proceedings of the European Conference on Computer Vision
  (ECCV)}, pages 396--412, 2018.

\bibitem{DBLP:journals/pami/LinGGHD20}
Tsung{-}Yi Lin, Priya Goyal, Ross~B. Girshick, Kaiming He, and Piotr
  Doll{\'{a}}r.
\newblock Focal loss for dense object detection.
\newblock {\em {IEEE} Trans. Pattern Anal. Mach. Intell.}, 42(2):318--327,
  2020.

\bibitem{ma2019exploiting}
Wei-Chiu Ma, Ignacio Tartavull, Ioan~Andrei B{\^a}rsan, Shenlong Wang, Min Bai,
  Gellert Mattyus, Namdar Homayounfar, Shrinidhi~Kowshika Lakshmikanth, Andrei
  Pokrovsky, and Raquel Urtasun.
\newblock Exploiting sparse semantic hd maps for self-driving vehicle
  localization.
\newblock {\em arXiv preprint arXiv:1908.03274}, 2019.

\bibitem{DBLP:journals/corr/abs-2002-08394}
Kaustubh Mani, Swapnil Daga, Shubhika Garg, N.~Sai Shankar, Krishna~Murthy
  Jatavallabhula, and K.~Madhava Krishna.
\newblock Monolayout: Amodal scene layout from a single image.
\newblock {\em CoRR}, abs/2002.08394, 2020.

\bibitem{DBLP:conf/ivs/NevenBGPG18}
Davy Neven, Bert~De Brabandere, Stamatios Georgoulis, Marc Proesmans, and
  Luc~Van Gool.
\newblock Towards end-to-end lane detection: an instance segmentation approach.
\newblock In {\em 2018 {IEEE} Intelligent Vehicles Symposium, {IV} 2018,
  Changshu, Suzhou, China, June 26-30, 2018}, pages 286--291. {IEEE}, 2018.

\bibitem{pan2020cross}
Bowen Pan, Jiankai Sun, Ho~Yin~Tiga Leung, Alex Andonian, and Bolei Zhou.
\newblock Cross-view semantic segmentation for sensing surroundings.
\newblock {\em IEEE Robotics and Automation Letters}, 5(3):4867--4873, 2020.

\bibitem{paz2020probabilistic}
David Paz, Hengyuan Zhang, Qinru Li, Hao Xiang, and Henrik Christensen.
\newblock Probabilistic semantic mapping for urban autonomous driving
  applications.
\newblock {\em arXiv preprint arXiv:2006.04894}, 2020.

\bibitem{philion2020lift}
Jonah Philion and Sanja Fidler.
\newblock Lift, splat, shoot: Encoding images from arbitrary camera rigs by
  implicitly unprojecting to 3d.
\newblock In {\em Proceedings of the European Conference on Computer Vision},
  2020.

\bibitem{ravi2018real}
B Ravi~Kiran, Luis Roldao, Benat Irastorza, Renzo Verastegui, Sebastian Suss,
  Senthil Yogamani, Victor Talpaert, Alexandre Lepoutre, and Guillaume Trehard.
\newblock Real-time dynamic object detection for autonomous driving using prior
  3d-maps.
\newblock In {\em Proceedings of the European Conference on Computer Vision
  (ECCV) Workshops}, pages 0--0, 2018.

\bibitem{rella2021decoder}
Edoardo~Mello Rella, Jan-Nico Zaech, Alexander Liniger, and Luc Van~Gool.
\newblock Decoder fusion rnn: Context and interaction aware decoders for
  trajectory prediction.
\newblock {\em arXiv preprint arXiv:2108.05814}, 2021.

\bibitem{richards1999remote}
John~Alan Richards and JA Richards.
\newblock {\em Remote sensing digital image analysis}, volume~3.
\newblock Springer, 1999.

\bibitem{DBLP:conf/cvpr/RoddickC20}
Thomas Roddick and Roberto Cipolla.
\newblock Predicting semantic map representations from images using pyramid
  occupancy networks.
\newblock In {\em 2020 {IEEE/CVF} Conference on Computer Vision and Pattern
  Recognition, {CVPR} 2020, Seattle, WA, USA, June 13-19, 2020}, pages
  11135--11144. {IEEE}, 2020.

\bibitem{seif2016autonomous}
Heiko~G Seif and Xiaolong Hu.
\newblock Autonomous driving in the icity—hd maps as a key challenge of the
  automotive industry.
\newblock {\em Engineering}, 2(2):159--162, 2016.

\bibitem{sun2019leveraging}
Tao Sun, Zonglin Di, Pengyu Che, Chun Liu, and Yin Wang.
\newblock Leveraging crowdsourced gps data for road extraction from aerial
  imagery.
\newblock In {\em Proceedings of the IEEE/CVF Conference on Computer Vision and
  Pattern Recognition}, pages 7509--7518, 2019.

\bibitem{ventura2018iterative}
Carles Ventura, Jordi Pont-Tuset, Sergi Caelles, Kevis-Kokitsi Maninis, and Luc
  Van~Gool.
\newblock Iterative deep learning for road topology extraction.
\newblock {\em arXiv preprint arXiv:1808.09814}, 2018.

\bibitem{wu2020motionnet}
Pengxiang Wu, Siheng Chen, and Dimitris~N Metaxas.
\newblock Motionnet: Joint perception and motion prediction for autonomous
  driving based on bird's eye view maps.
\newblock In {\em Proceedings of the IEEE/CVF Conference on Computer Vision and
  Pattern Recognition}, pages 11385--11395, 2020.

\bibitem{yang2018hdnet}
Bin Yang, Ming Liang, and Raquel Urtasun.
\newblock Hdnet: Exploiting hd maps for 3d object detection.
\newblock In {\em Conference on Robot Learning}, pages 146--155. PMLR, 2018.

\bibitem{DBLP:conf/siu/YenIaydinS18}
Yasin YenIaydin and Klaus~Werner Schmidt.
\newblock A lane detection algorithm based on reliable lane markings.
\newblock In {\em 26th Signal Processing and Communications Applications
  Conference, {SIU} 2018, Izmir, Turkey, May 2-5, 2018}, pages 1--4. {IEEE},
  2018.

\bibitem{zaech2020action}
Jan-Nico Zaech, Dengxin Dai, Alexander Liniger, and Luc Van~Gool.
\newblock Action sequence predictions of vehicles in urban environments using
  map and social context.
\newblock In {\em International Conference on Intelligent Robots and Systems
  (IROS)}, 2020.

\end{thebibliography}
}

\newpage

\section{Appendix}

\section{Code}
The github page: \url{https://github.com/ybarancan/TopologicalLaneGraph}

\section{Theory}

\subsection{Assumptions}

In this subsection, we provide the additional assumptions we make on top of the ones listed in the main paper.

% In this work we consider any pair of curve sets $\textbf{C}_1$ and $\textbf{C}_2$ equivalent if their respective $\textbf{I}_1$ and $\textbf{I}_2$ are same. This implies that given a curve set $\textbf{C}$, we are allowed to move and/or deform curves so long as $\textbf{I}$ stays the same. 

\paragraph{A curve cannot appear more than once in a minimal cycle.} This holds because the divergence and convergence of the lanes result in centerlines with different identities. 

\paragraph{Direction assumption.} Let the direction of a minimal cycle $MC_i$ be defined as $D(MC_i) \in \{0,1\}$, i.e. 0=clockwise and 1= counter-clockwise. It induces an ordering of the segments in the minimal cycle relative to an intersection point $P$ in the cycle such that $u_i > u_j$ for any two segment $u_i$ and $u_j$ in the minimal cycle if $u_i$ appears later than $u_j$ when traversing the cycle starting from $P$ in the direction $D(MC_i)$. Similarly an intersection point $p_i > p_j$ if $p_i$ appears later than $p_j$. 

Consider two minimal cycles $MC_i$ and $MC_j$ where at least one curve $U_n$ appears in both cycles. The segment of $U_n$ in $MC_i$ is $u_n$ and the segment in $MC_j$ is $u_n'$. Since a curve is continuous there is a path connecting $u_n$ to $u_n'$ such that it only consists of $U_n$. Let this path be $VG(u_n) = VG(u_n')$ and the intersection point that this path intersects $MC_i$ be $PP(u_n)$ and the point it intersects in $MC_j$ be $PP(u_n')$. The other intersection points that define their respective segments is $K(u_n)$ and $K(u_n')$ respectively. 

We define that two minimal cycles are neighbors if they share at least one intersection point. Based on this, we have the following property (direction property): Let $P=PP(u_n)$ and $D(MC_i)$ be the direction from $PP(u_n)$ to $K(u_n)$ through $u_n$. Then $D(MC_j) \neq D(MC_i)$ and there exists a path from $K(u_n)$ to $PP(u_n')$ entirely on two minimal cycles and following the direction of the minimal cycle it is in such that it does not intersect $U_n$ except at $K(u_n)$ and $PP(u_n')$. If $MC_i$ and $MC_j$ are not neighbors, then the path can teleport to $MC_j$ at any intersection point other than $PP(u_n')$ or $K(u_n')$.

For minimal cycles with shared segments or no neighborhood at all, this means: $\; \exists \; (D(MC_i), D(MC_j)) \;|\; D(MC_i)\neq D(MC_j) \;\& \; \forall \;PP(u_n) \in \{U_n \cap MC_i\}, \forall K(u_n) \in \{U_n \cap MC_i\}\;;\; PP(u_n) \mathop{\lessgtr} K(u_n) \implies PP(u_n') \mathop{\lessgtr} K(_n')$. This means there exists a tuple of opposing directions such that for all the shared curves in both minimal cycles, if the port point of the curve in a cycle appears later/earlier than its corresponding non-port point, the same order has to be preserved in the other cycle. Note that, this holds for all reference points that the directions are defined on. Moreover, negation of both directions does not affect the condition. Therefore, for any arbitrarily assigned $D(MC_i)$, $D(MC_j)$ such that $D(MC_i) \neq D(MC_j)$ has to support this condition. In Fig \ref{fig:topo-exp} subfigure i-a, we focus on shared curve $C_1$ in minimal cycles A and C. These minimal cycles are not neighbors. The port point of $C_1$ in A, $PP(c_1)$, is the blue dot and the port point in C, $PP(c_1')$, is the red dot. Thus, the path should be from green dot , $K(c_1)$, to red dot, $PP(c_1')$. The path is shown with green arrows while the directions of the minimal cycles are shown by the black arrows surrounding the letters.

If two minimal cycles share intersection points but no segments, the condition is trivially satisfied for the curves at these intersection points since $PP(u_n')=PP(u_n)$ and the path is simply either of the minimal cycles. In Fig \ref{fig:topo-exp} subfigure i-b, this case is shown. The port point of $C_3$ in D, $PP(c_3)$, is the same point as $PP(c_3')$ and represented by the red dot. While the green dots represent the $K(c_3')$ in E and $K(c_3)$ in D. The green arrows in D show the path where the beginning point is $K(c_3)$. The arrow in E shows the path when the beginning point is $K(c_3')$. In both cases, the entire paths are in either of the minimal cycles, hence obeying the assumption. 

This assumption is suitable for our case since the lanes tend to have limited curvature. Therefore, for a minimal cycle that violates this assumption, the curves need to have significantly varying curvatures. An example with two minimal cycles with same minimal covers violating direction assumption is given in Fig.~\ref{fig:neighbor-counter}. 

\begin{figure*}
    \centering
    \includegraphics[width=.8\linewidth]{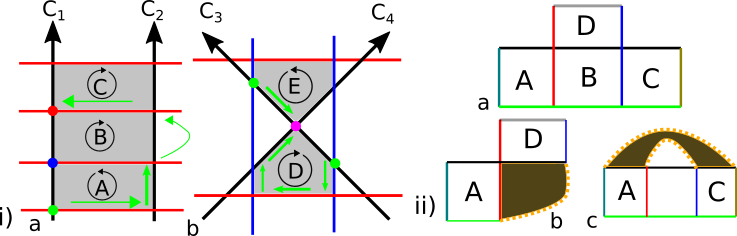}
    \caption{i) The direction property for 2 most common types of minimal cycles. a) We consider cycles A and C with focus on curve $C_1$. Green dot represents $K(C_1)$, red dot represents $PP(C_1')$ and blue dot is $PP(C_1)$. The green arrows show the path from green dot to red dot, teleporting from A to C over B. In b), we show how the shared intersection point trivially satisfies the property. We focus on $C_3$ where green dots show $K(C_3)$ and $K(C_3')$. In this example, $PP(C_3')$ = $PP(C_3)$, thus red and blue dots coincide and shown in purple. ii) Accessibility property and the resulting areas (shaded) for 2 pairs of minimal cycles (A-D) and (A-C). For A-D, the shared intersection point acts as the first artificial path.}
    \label{fig:topo-exp}
    
\end{figure*}

\paragraph{Accessibility assumption.} This assumption states that for all pairs of minimal cycles, it is possible to draw two artificial curves such that the curves start at different intersection points on one cycle and end on different intersection points on the other cycle. The curves cannot intersect and the resulting area bounded by the two curves and two minimal cycles is not intersected by any other curves that are in both of the minimal cycles except on the cycles. In other words, pick two minimal cycles and remove any curve that does not appear in any of those two cycles. Then, if a curve appears in only one of the cycles, only keep the segment of the curve that is in one of the considered minimal cycles. If a curve appears in both cycles, keep its segments in both cycles as well as its $VG$. In the remaining structure it is possible to draw two paths from two distinct intersection points on one cycle to two distinct intersection points on the other cycle such that these two paths do not intersect and no curve intersects the area bounded by the these two artificial curves and the minimal cycles. This property is trivially satisfied by the minimal cycles that share a segment since the two endpoints of a shared segment serve as two distinct points on both cycles. 

We show some example cases for this assumption in Fig \ref{fig:topo-exp} subfigure ii. In a) we see the complete lane graph. In b) we show the mentioned area with shaded region when the minmal cycles A and D are selected. Since these cycles share an intersection point, it can act as one of the two paths needed to enclose the area. The other path is from green curve to blue curve. In c), we see the case if we focus on minimal cycles A and C. Since the gray curve is not in any of the minimal cycles in focus, we simply ignore it. Moreover, since red and blue curves only appear in one of the cycles, we ignore their segments that are not in the minimal cycles A and C. However, black and green curves appear in both cycles, thus all of their segments are included. The proposed area is the shaded region that satisfies the assumption. 

This assumption is suitable for our case since merging or divergence of lanes are represented with different road segments that correspond to different curves in our formulation. This assumption's underlying intuition is that curves are short and limited in their change in curvature. 
.

\begin{assumption}
Any two curves can intersect at most once.
\end{assumption}
\begin{assumption}
No curve can intersect with itself.
\end{assumption}
\begin{assumption}
No curve is floating, i.e. every curve is \textbf{connected} (hence intersecting) with another curve in its start and endpoints. 
\end{assumption}
\begin{assumption}
Curves can appear in one minimal cycle at most once.
\end{assumption}
\begin{assumption}
Direction assumption
\end{assumption}
\begin{assumption}
Accessibility assumption
\end{assumption}
% \begin{itemize}
%     \item Any two curves can intersect at most once.
%     \item No curve can intersect with itself.
%     \item No curve is floating, i.e. every curve is \textbf{connected} (hence intersecting) with another curve in its start and endpoints. 
%     \item Curves can appear in one minimal cycle at most once.
%     \item Direction assumption
%     \item Accessibility assumption
% \end{itemize}

\begin{figure}
    \centering
    \includegraphics[width=.7\linewidth]{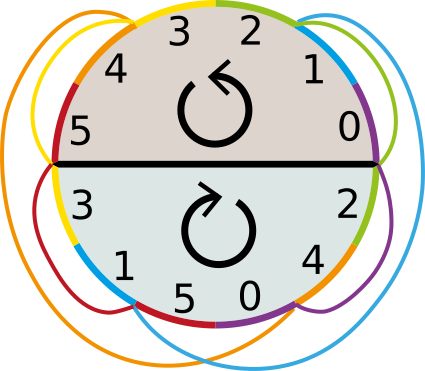}
    \caption{The \textbf{simplest} counter example with shared segments if direction assumption is violated. Obviously, it is very unlikely that such a structure would occur in a lane graph. The orange curve (curve 4) and blue curve (curve 1) violate the assumption.}
    \label{fig:neighbor-counter}
    
\end{figure}
\begin{lemma}
\label{lemma1}
A minimal closed polycurve (minimal cycle) $MC$ is uniquely identified by its minimal cover $B$
\end{lemma}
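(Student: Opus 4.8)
\textbf{Proof proposal for Lemma~\ref{lemma1}.}

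The plan is to argue by contradiction: suppose a minimal cover $B=\{c_{i_1},\dots,c_{i_k}\}$ is shared by two distinct minimal cycles $MC_1$ and $MC_2$, and derive a violation of the assumptions (single intersection per curve pair, no self-intersection, direction assumption, accessibility assumption). The first step is to recall from the structure $T(C,I)$ that a minimal cycle is a closed polycurve built from curve \emph{segments} $S_c(i,j)$, each segment bounded by two consecutive intersection points on $c$. Since the minimal covers coincide, every curve in $B$ contributes at least one segment to each of $MC_1$ and $MC_2$; by Assumption~4 (a curve appears at most once in a minimal cycle) each $c_{i_\ell}$ contributes \emph{exactly} one segment $u_\ell$ to $MC_1$ and exactly one segment $u_\ell'$ to $MC_2$. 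If for every $\ell$ we had $u_\ell=u_\ell'$, the two closed polycurves would be built from literally the same segments and hence enclose the same region, giving $MC_1=MC_2$; so there must be some curve $c_n\in B$ with $u_n\neq u_n'$. Because $c_n$ is a single continuous curve and (Assumption~2) does not self-intersect, $u_n$ and $u_n'$ are joined along $c_n$ by the subcurve $VG(u_n)=VG(u_n')$ introduced in the direction assumption, with port points $PP(u_n)\in MC_1$ and $PP(u_n')\in MC_2$.

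Next I would invoke the accessibility assumption to produce the two non-crossing artificial paths connecting $MC_1$ to $MC_2$, so that $MC_1$, $MC_2$, and these two paths bound a region $R$ whose interior is not crossed by any curve of $B$ other than along the cycles themselves; curves of $C\setminus B$ are irrelevant since, being outside both covers, they cannot enter the area enclosed by a minimal cycle (that is exactly the defining property of minimality). The core of the argument is then an orientation/parity computation: traverse $MC_1$ in direction $D(MC_1)$ and $MC_2$ in direction $D(MC_2)$ and follow, via the connecting subcurve $VG(u_n)$ and the artificial paths, a closed loop around $\partial R$. The direction property guarantees $D(MC_2)\neq D(MC_1)$ and that the port/non-port ordering of each shared curve is preserved across the two cycles; combining this with the fact that $u_n$ and $u_n'$ are \emph{distinct} segments of the same curve forces $c_n$ to re-enter the enclosed area of one of the minimal cycles, or equivalently to intersect some other curve of $B$ a second time, contradicting Assumption~1 (or contradicting minimality of $MC_1$ or $MC_2$, since a curve would then cut through the enclosed region). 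For the special case where $MC_1$ and $MC_2$ share an intersection point but no segment, the argument collapses to the trivial observation noted after the direction assumption: $PP(u_n')=PP(u_n)$ and the connecting path lies entirely on one of the cycles, so no new crossing is possible and again $MC_1=MC_2$.

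I would finish by noting the base case that makes the statement transparent: for straight lines (Fig.~\ref{fig:basics}) a minimal cycle is a convex polygon whose edge-lines are exactly $B$, and a set of lines in general position bounds at most one bounded face on any prescribed side, so uniqueness is immediate; the lemma is the extension of this to bounded-curvature curves under the stated assumptions. The main obstacle I anticipate is making the orientation bookkeeping around $\partial R$ fully rigorous — in particular, handling the case distinctions (shared segment vs.\ shared intersection point only vs.\ no shared intersection at all, where a "teleport" along an auxiliary path is needed) uniformly, and showing that the region $R$ supplied by the accessibility assumption is actually a topological disk so that a Jordan-curve / winding-number argument applies. Everything else — the reduction to a single discrepant curve, the use of $VG(u_n)$, and the discharge into Assumption~1 — is routine once that planar-topology scaffolding is in place.
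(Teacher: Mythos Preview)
Your setup matches the paper's: contradiction, two minimal cycles $MC_1,MC_2$ with the same cover, a case split on whether they share a segment, and appeals to Assumptions~1--6. But the \emph{engine} you propose is different from the paper's, and as written it has a real gap.

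You try to reduce to a \emph{single} discrepant curve $c_n$ with $u_n\neq u_n'$ and then run an orientation/winding argument around a region $R$ to force $c_n$ to hit some curve of $B$ twice. The paper does not do this, and I do not think the single-curve reduction is ``routine.'' One curve having distinct segments in $MC_1$ and $MC_2$ does not by itself force a double intersection; what forces the contradiction in the paper is a \emph{global ordering} argument over all non-shared segments. Concretely, after fixing the reference point at the end of the last shared segment and listing $u_0,\dots,u_N$ in $MC_1$ and $w_0,\dots,w_M$ in $MC_2$, the paper proves (their ``Statement~1'') that if $u_i,u_{i+1}$ do not share a port point then $u_{i+1}'>u_i'$ in $MC_2$, because $VG(u_i)$ together with the shared boundary separates the plane so that $VG(u_{i+1})$ cannot reach the earlier part of $MC_2$ without crossing $VG(u_i)$ a second time. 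Iterating this yields $u_N'>u_{N-1}'>\dots>u_0'$, i.e.\ the cyclic order of curves is the \emph{same} in both cycles; hence every pair of neighbors in $MC_1$ is again a pair of neighbors in $MC_2$, which gives two intersection points for the same pair of curves and violates Assumption~1. The no-shared-segment case is reduced to this one by inserting the artificial segment from Assumption~6 and running the identical iteration. Your proposal never produces this order-preservation step, and the ``parity computation'' you sketch would need something of exactly this strength to close.

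Two smaller points. First, your handling of the ``shared intersection point but no segment'' case is a non-sequitur: the direction assumption being trivially satisfied at a shared point only tells you the path $VG$ is degenerate there; it does not imply $MC_1=MC_2$. Second, your claim that curves in $C\setminus B$ ``cannot enter the area enclosed by a minimal cycle'' is true, but that is not what you need for Assumption~6; the accessibility region lies \emph{between} the two cycles, and the point is that no \emph{shared} curve crosses it --- which is automatic here only because the covers coincide, so every curve is shared. The paper uses exactly this observation to treat the non-neighbor case as if the cycles were adjacent.
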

\begin{proof}
Assume there is more than one minimal cycles defined by the same minimal cover and let the first minimal cycle be $MC_1$ with $D(MC_1)$ and the second $MC_2$ with $D(MC_2)$. There are two scenarios: the cycles share at least one segment or not. Let's start with the former case.  

For proving the shared segment case of the lemma, let $D(MC_1) \neq D(MC_2)$. Pick the intersection point where the last shared segment ends (end defined by $D(MC_1)$) as the reference point. Segments $u_0, ..., u_N$ refer to the segments other than the shared ones in $MC_1$ with the order induced by the reference point and $D(MC_1)$. The curves these segments belong to are $U_0, ..., U_N$. Similarly, $w_0, ..., w_M$ are the segments in $MC_2$ with the same reference point but $D(MC_2)$ with the curves $W_0, ..., W_M$. Note that this means $u_0$ and $w_0$ are neighbors as well as $u_N$ and $w_M$. Assume $W_0 \neq U_0$ and $W_j=U_0$ where $j>1$ since $W_0$ and $U_0$ already intersect. We begin by deciding whether $PP(u_0) > K(u_0)$ or not. Let $PP(u_0) > K(u_0)$. From the \textbf{Assumption 5}, if $VG(u_0)$ follows $D(MC_1)$, $VG(u_0)$ covers the space around $u_0,... u_N, w_M, ... u_0'$ and otherwise  $u_0, w_0, ... u_0'$. That is, since $PP(u_0) > K(u_0)$, $PP(u_0') > K(u_0')$. In both cases, $VG(w_0)$ has to intersect with $VG(u_0)$ to connect $W_0$ with $MC_1$. This is true because $MC_1$ and $MC_2$ are minimal cycles and thus, no path can pass through these cycles. See Fig ~\ref{fig:proof} for visualization. Since $W_0$ and $U_0$ are neighbors, intersection of $VG(w_0)$ and $VG(u_0)$ would mean $W_0$ and $U_0$ intersecting twice, thus violating \textbf{Assumption 1}. Therefore, $PP(u_0) < K(u_0)$. In this case, $PP(u_0) = PP(w_0)$ to avoid intersection twice. This means they share the port points. Moreover, since a curve cannot appear more than once in a minimal cycle (\textbf{Assumption 4}), we know that $N=M$.

\textbf{Statement 1: } Consider two segments $u_i$ and $u_{i+1}$ in $MC_1$. If they do not share a port point, $u_{i+1}' > u_i'$. The reason is  $VG(u_i)$ either covers $[u_i, ... , u_0, w_0, ..., u_i']$ or $[u_i, ... , u_{|U|-1}, w_{|W|-1}, ..., u_i']$ based on \textbf{Assumption 5}. In either case, $VG(u_{i+1})$ cannot reach $[u_i, ... , w_0, ..., u_i']$, without intersecting $VG(u_i)$ since the area in $MC_1$ and $MC_2$ cannot be intersected because they are minimal cycles. Note that this holds if $MC_1$ and $MC_2$ share at least one segment.

Now consider $u_1$; We know that $u_0$ and $u_1$ do not a share port point because $u_0$ already has a port point with its intersection with $w_0$. Having two port points would mean $U_0$ itself is a cycle which is not allowed by \textbf{Assumption 2}. From Statement 1, we know that $u_1' > u_0'$. If $PP(u_1) > K(u_1)$, $PP(u_1') > K(u_1')$. Therefore, $u_1'-1$ (the neighbor of $u_1'$ that appears just before $u_1'$) cannot share a port point with $u_1'$. This means from Statement 1, $(u_1'-1)'$ has to appear before $u_1$. However, only $u_0$ appears before $u_1$ and $U_0$ and $U_1$ cannot be neighbors in $MC_2$ since they are already neighbors in $MC_1$. Therefore, we reach the conclusion that $PP(u_1) < K(u_1)$. 

Since $PP(u_1) < K(u_1)$, $U_2$ has to appear after $U_1$. With a similar argument as $U_1$, $PP(u_2) > K(u_2)$ is impossible because it would require its neighbor with a lower order in $MC_2$ to either be $U_0$ or $U_1$ to avoid intersecting $VG(u_2)$ twice. This means $PP(u_2) < K(u_2)$. Iteratively applying this argument, we reach that $u_N' > u_{N-1}' > ... > u_1' > u_0'$. This means the same order has to be preserved which indicates neighbors in $MC_1$ are also neighbors in $MC_2$ violating the rule of at most one intersection between any two curves.

Now let $W_0 = U_0$. The same argument carries over. Simply remove $U_0$ from $U$ and $W_0$ from $W$ and apply the same procedure as the case $W_0 \neq U_0$.

\paragraph{No shared segment.} We know that there exists an area between two cycles such that no shared curves intersect from \textbf{Assumption 6}. Since $MC_1$ and $MC_2$ have the same list of curves, all curves are shared. Therefore, no curve in either $MC_1$ or $MC_2$ intersects this area. This implies, $MC_1$ and $MC_2$ can be essentially considered as neighbors. We can apply the same arguments as used in case with shared segment. For this, consider the area that no curve can intersect \textbf{(Assumption 6)} and the part of $MC_1$ and $MC_2$ it intersects. Let the curve it intersects on $MC_1$ be $U_0$ and correspondingly $W_0$ in $MC_2$. Moreover, let the intersection region of this artificial curve and $MC_1$ be an artificial segment $a$ and correspondingly $a'$ in $MC_2$ with this artificial curve being $A$. 
We know $PP(u_0) \ngtr a$ from the shared segment case. The reason being $u_0'-1$ would be completely cut off from $MC_1$ by $VG(U_0)$ and $A$. Thus, $PP(u_0) < a$. Similar to the shared segment case, this means $u_1'$ appears after $u_0'$ in $MC_2$. Moreover, $PP(u_1) > K(u_1)$ would imply that $u_1'-1$ would be completely cut-off from $MC_1$. Therefore $PP(u_1) < K(u_1)$, which implies $u_2'$ appears after $u_1'$. The same iterative argument shows that $u_N' > u_{N-1}' > ... > u_1' > u_0'$.     

% Since any two curves can intersect at most once, whenever some $U_i = U_j$, $|u-j| > 2$. That is, whenever a curve appears more than once in a minimal cycle, it there has

\end{proof}

\begin{figure}
    \centering
    \begin{subfigure}[b]{.7\linewidth}
    \includegraphics[width=.7\linewidth]{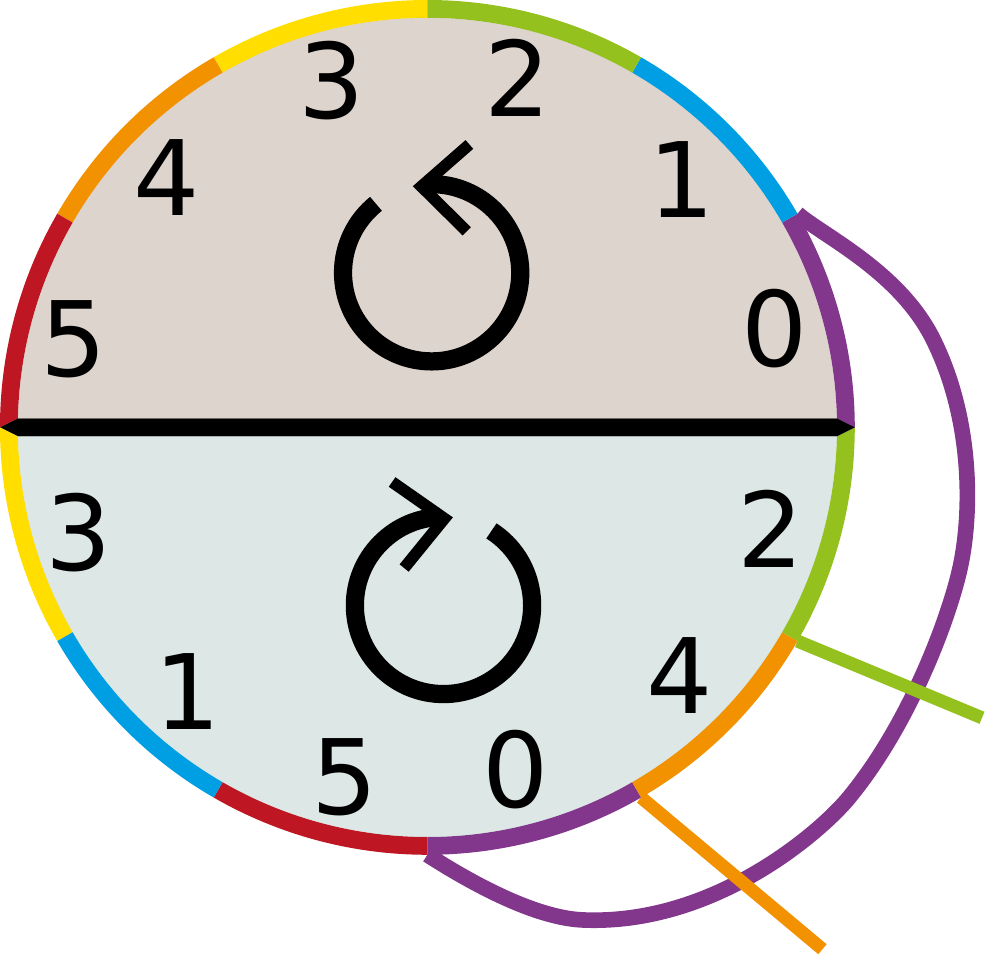}
    \end{subfigure}
    \begin{subfigure}[b]{.7\linewidth}
    \includegraphics[width=.7\linewidth]{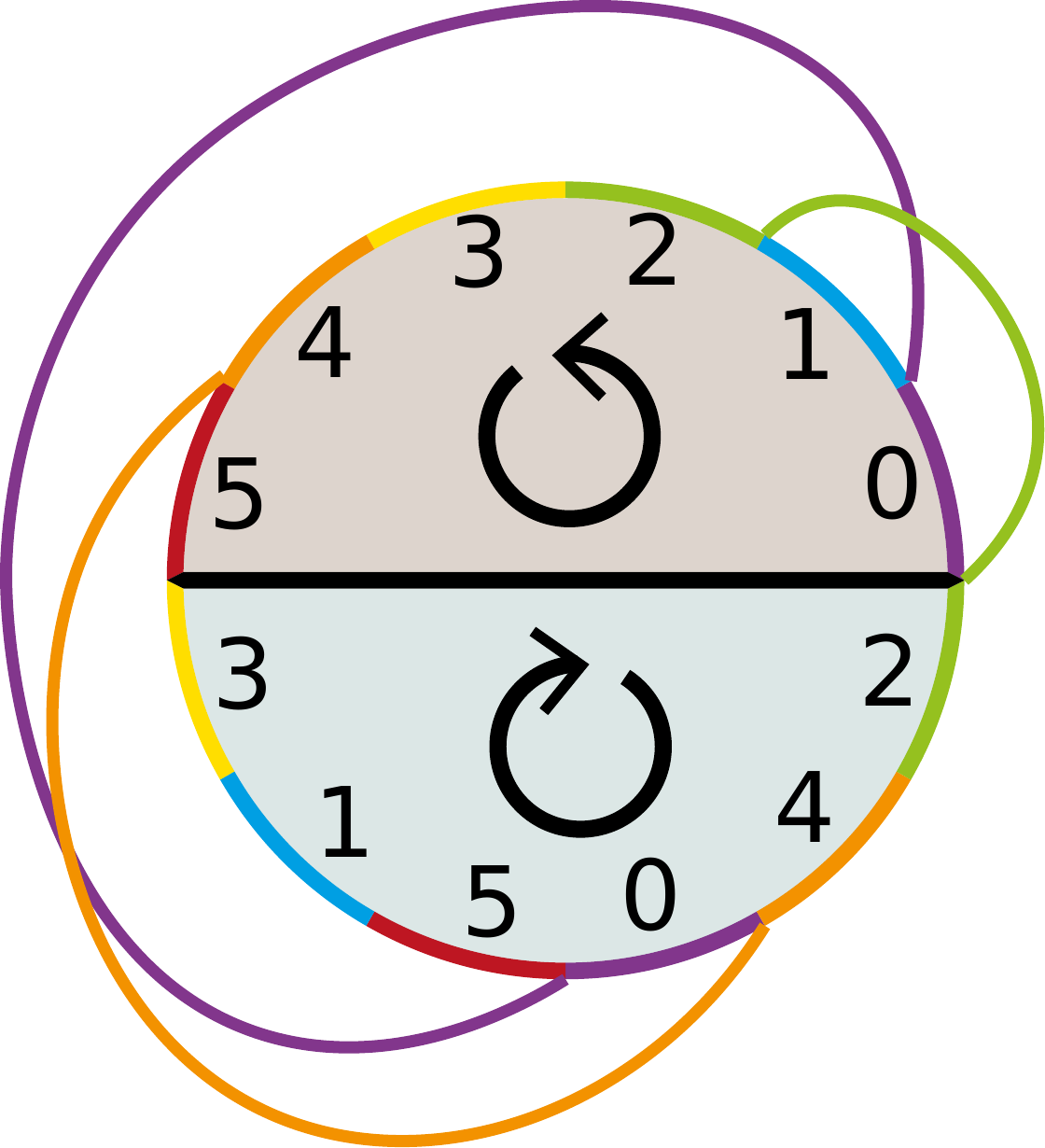}
    \end{subfigure}
    
       \caption{The illustration of the proof for Lemma 1.1. Due to the direction assumption, curve 0 blocks curve 4 and curve 2 accessing the upper minimal cycle ($MC_1$ in the proof). This situation happens regardless of the route $VG(U_0)$ takes (here it is the purple path connecting the curve 0 between two minimal cycles).}
       
       \label{fig:proof}
\end{figure}

\begin{lemma}
\label{lemma2}
Let a set of curves $C_1$ and the induced intersection orders $I_1$ form the structure $T_1=(C_2, I_2)$. Applying any deformations on the curves in $C_1$ (but no removal or addition of curves) and the resulting induced intersection orders create $T_2=(C_2, I_2)$. $I_1 = I_2 \iff {MC}_1 == {MC}_2$. In other words, the global intersection orders of the two structures are same if and only if the set of minimal cycles are same.
\end{lemma}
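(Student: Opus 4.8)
The plan is to prove the two implications separately, working throughout in the picture (noted right after the definition of minimal cycles) in which $C$ together with the FOV boundary forms a planar arrangement whose bounded faces are exactly the minimal cycles; each minimal cycle is literally a closed polycurve, i.e.\ a cyclic word over $C$ whose consecutive entries cross, with each curve used at most once (Assumption~4). Since $C_1=C_2$ (only deformations are applied), ``$MC_1=MC_2$'' means the two structures induce the same set of such cyclic words.

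\textbf{Reverse implication ($MC_1=MC_2\Rightarrow I_1=I_2$).} I would show $I$ can be read off from $MC$, so identical $MC$ forces identical $I$. Fix a curve $c$; its crossers are exactly the curves that occur next to $c$ in some polycurve of $MC$. Because any two curves cross at most once (Assumption~1), every adjacency of $c$ with a fixed crosser $c'$ sits at the single point $c\cap c'$; hence the set $N(c,c')$ of minimal cycles in which $c$ and $c'$ are adjacent is exactly the set of faces with a corner at that crossing, and it is defined from the set $MC$ alone, so it is the same in $T_1$ and $T_2$. Now take two crossers $c',c''$ of $c$: any face in $N(c,c')\cap N(c,c'')$ has a corner at $c\cap c'$ and a corner at $c\cap c''$, each using one of the (at most two) segments of $c$ incident there; by Assumption~4 a minimal cycle cannot use two distinct segments of $c$, so these segments must coincide, i.e.\ the crossings $c\cap c'$ and $c\cap c''$ are consecutive along $c$. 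Conversely, if they are consecutive, the two faces flanking the intervening segment of $c$ lie in $N(c,c')\cap N(c,c'')$. Thus $N(c,c')\cap N(c,c'')\neq\emptyset$ is precisely the consecutivity relation among $c$'s crossers, which pins down their linear order up to reversal, and the reversal is fixed by the traffic direction assigned to $c$. Doing this for every curve reconstructs $I$ from $MC$.

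\textbf{Forward implication ($I_1=I_2\Rightarrow MC_1=MC_2$).} Since $C$ and $I$ agree, the crossing graph agrees and, for each curve, which of its crossings are consecutive agrees; hence a cyclic word bounds a closed polycurve in $T_1$ iff it does in $T_2$. The remaining point is to transfer \emph{minimality}. Suppose a closed polycurve $P$ encloses an empty region in $T_1$ but in $T_2$ some curve $d$ has a segment inside the region of $P$. Following $d$ out of that region would make $d$ cross the boundary of $P$ at a crossing not recorded in $I$ — impossible — unless $d$ is one of $P$'s own curves and merely bulges across its boundary; that last case is excluded by the direction assumption (Assumption~5) via the same blocking/teleporting argument used in the proof of Lemma~\ref{lemma1}, with the accessibility assumption (Assumption~6) handling the configurations without a shared segment. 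Hence $P$ is minimal in $T_2$, and by symmetry the sets of minimal cycles coincide.

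\textbf{Main obstacle.} The reverse implication is essentially bookkeeping on top of Assumptions~1 and~4 and the face/partition picture. The real difficulty lies in the forward implication, precisely the minimality transfer: ruling out that equal intersection orders could hide a curve lying ``inside'' a would-be minimal cycle in one structure but not the other. This is the step that actually consumes the direction and accessibility assumptions and repeats the case analysis of Lemma~\ref{lemma1}; invoking that analysis by reference, rather than reproducing it, is the main thing to get right in the write-up.
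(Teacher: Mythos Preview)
Your reverse implication is correct and more constructive than the paper's: you rebuild each $I_c$ by reading off the consecutivity relation (crossers $c'$, $c''$ are adjacent on $c$ iff some minimal cycle contains both as neighbours of $c$). The paper instead observes that identical minimal cycles are built from identical curve segments, hence identical intersection points, and since the minimal cycles partition the bounded region every segment appears in at least one cycle, so $I$ is fully determined. Both routes work; yours makes the recovery of the \emph{order} (not just the intersection set) more explicit.

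Your forward implication, however, takes an unnecessary and not-quite-justified detour. The paper's argument is short and rests on Assumption~3 (no floating curves), which you never cite: if some curve $C_p$ has a piece inside the region bounded by $P$ in $T_2$, it cannot terminate there, so it must cross a boundary curve $C_r$ at a point strictly between the two intersections delimiting $C_r$'s segment on $P$; in $T_1$ those two were consecutive in $I_{C_r}$, so this contradicts $I_1=I_2$. This already handles your ``bulging'' case: if $d$ lies on $P$ and its continuation enters the interior, it must exit either through its own segment (violating Assumption~2), through a neighbour's segment (a second crossing, violating Assumption~1), or through a non-neighbour's segment (inserting a crossing between formerly consecutive ones, changing $I$). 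Assumptions~5 and~6 are not needed here, and your appeal to the Lemma~\ref{lemma1} case analysis does not fit --- that lemma compares two distinct minimal cycles sharing the \emph{same} cover, which is a different configuration from a single closed polycurve being intersected by one curve. The step you flag as the ``main obstacle'' is in fact the short direction once you invoke Assumption~3.
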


\begin{proof}
Let's begin by proving the forward statement. Consider a $MC$ in $G_1$ and the set of intersection points and the curves creating this $MC$. Since $G_2$ has the same $I$ as $G_1$, a closed polycurve $CC$ in $G_2$ can be formed by the same set of intersection points and curves. Assume some curve $C_p$ intersects the area of closed polycurve $CC$ and hence $CC$ is not a minimal cycle. We know that no curve is floating from \textbf{Assumption 3} so $C_p$ has to intersect some other curve either in $CC$ or on $CC$. Either way, this means alteration in $I$ of at least 2 curves. 

For the opposite direction, consider a pair of identical $MC$s (one in $G_1$ and one in $G_2$). The cycles are formed by the same curve segments by definition. The curve segments are, in turn, defined by the intersection points on their respective curves. Since we know no curve is removed or added, the identical minimal cycles in the structures are formed by the identical intersection points. Since minimal cycles create a partition of the space, each curve segment has to appear in at least one minimal cycle. Thus, the set of intersection points $I_1 = I_2$.

\end{proof}

\section{Architectures}
Here, we provide some details regarding the architectures. 
\subsection{MC-Polygon-RNN (Ours/PRNN) }

Polygon-RNN produces $2N$ feature maps where $N$ is the number of initial points (which is the same as the number of centerlines). Since we use three control points, Polygon-RNN does two iterations to produce the rest of the control points. The $N$ final feature maps are then passed through an MLP to produce $N$ feature vectors. In the transformer NLP setting, these vectors correspond to the transformer encoder output which is the processed input sequence. Thus, in the transformer decoder, cycle queries attend to all the centerlines to produce the output.  

\begin{figure*}
    \centering
    \includegraphics[width=\linewidth]{}
    \caption{MC-Poly-RNN architecture is formed by addition of a transformer decoder on top of Polygon-RNN's last feature map outputs. }
    \label{fig:supp-polyrnn}
    
\end{figure*}

\subsection{TR-RNN }

In order to investigate the feasibility of estimating the order of intersections directly, we opted for an architecture that combines transformers and RNNs. Specifically, on top of the base transformer model, we use an RNN. The input to the RNN is $V_c$, i.e. the processed query vectors. Each processed query vector is processed by the RNN independently. Let the RNN input vector be $V_c(i)$, i.e. $i$th processed query vector. We refer to it as the reference curve. At each time step $t$ of the RNN, the output is a probability distribution over the estimated curves and the boundary curves, as well as an end token. Therefore, at each time step $t$, the output is $N+K+1$ dimensional, with $N$ estimated curves, $K$ boundary curves and one end token. Whenever the RNN outputs the end token, we gather the estimates at the previous steps. These outputs form the ordered sequence of intersections for the reference curve. Since the query vectors are processed jointly by the transformer, $V_c$ carries information about the whereabouts of the other curves as well. We measure the performance of this RNN by using the same order metric 'I-Order' on the RNN estimates.   

\begin{figure*}
    \centering
    \includegraphics[width=.8\linewidth]{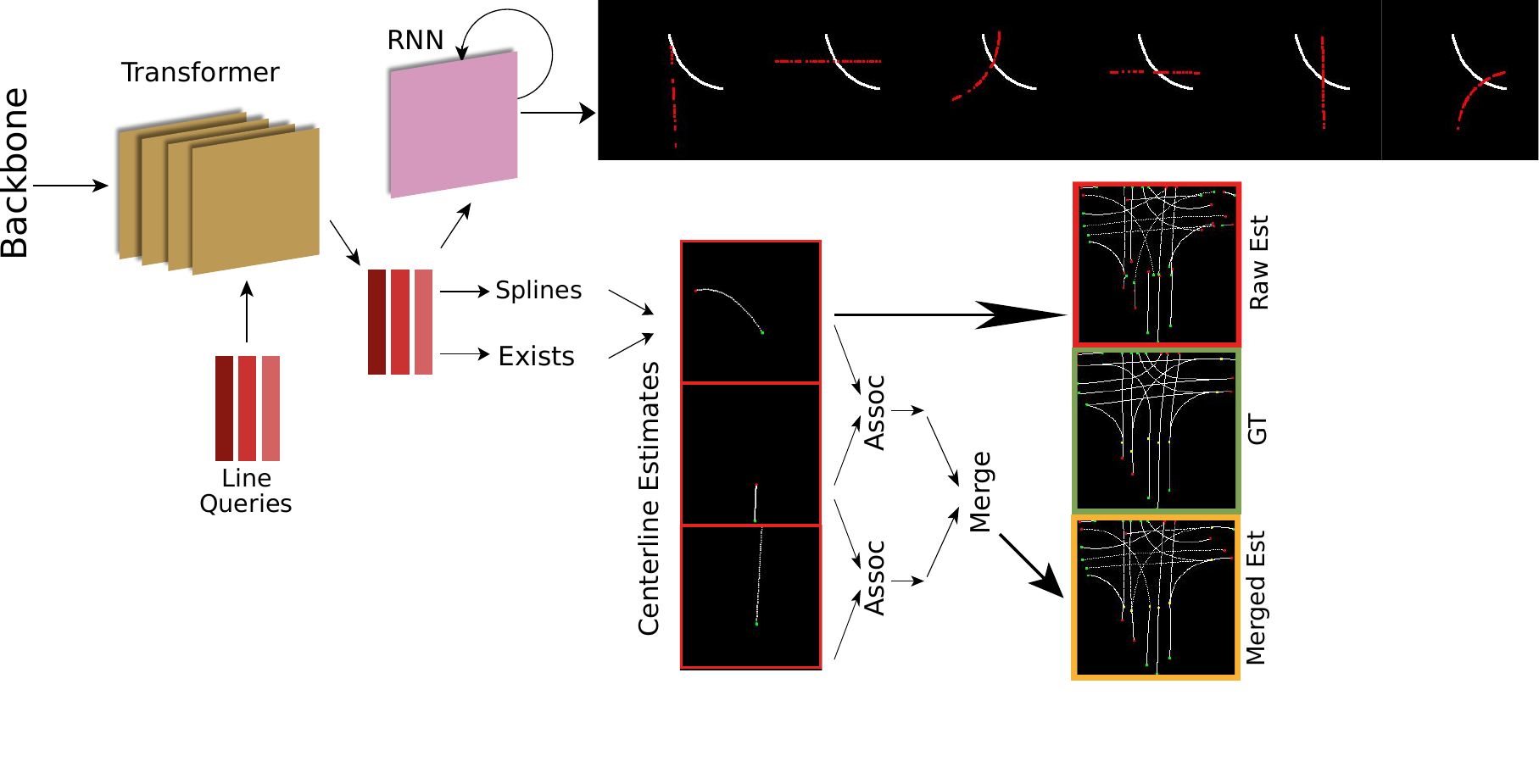}
    \caption{TR-RNN has an RNN operating on the transformer outputs. For each transformer processed curve query vector, it outputs a sequence of fixed sized ($N+K$ dimensional) vectors. Each such vector indicate the probability distribution over the estimated curves that the current reference query intersects in that order. In the figure, we show the reference query with white curve and the selected intersecting curve at each time step with red. This sequence is a direct estimation of the intersection order.}
    \label{fig:supp-polyrnn}
    
\end{figure*}

\section{Metrics}

We introduced 2 new metrics that measure topological structure accuracy. Here we present additional explanations on the calculation and the justifications for these metrics.

\subsection{Minimal Cycle Minimal Cover }

In order measure the performance of the methods on estimating the minimal cycles, we introduce a precision-recall based metric. The metric operates on min-matched set of curves. Similar to the connectivity metric, let $R(i)$ be the index of the target that the $i$th estimation is matched to and $S(n)$ be the set of indices of estimations that are matched to target $n$. Now let the GT minimal cycles be $MC_{GT} \in [0,1]^{M'\times {N'+K}}$ while $MC_{Est} \in [0,1]^{M\times {N+K}}$. We will form true positive (TP), false positive (FP) and false negative (FN) matrices of size $M'\times M \times {N'+K}$.

TP(i, j, k) = 1 if $ MC_{GT}(i,k) = 1 \; \& \; \exists \;n\; |\; ((R(n) = k) \;\&\; (MC_{Est}(j, n) = 1))$. In words, TP(i, j, k) is 1 if $i$th true minimal cycle includes $k$th true curve and $j$th estimated cycle has a curve in it that is matched to $k$th true curve in matching.

FN(i, j, k) = 1 if $ MC_{GT}(i,k) = 1 \; \& \; \nexists \;n\; |\; ((R(n) = k) \;\&\; (MC_{Est}(j, n) = 1))$. 

FP(i, j, k) = 1 if $ MC_{GT}(i,k) = 0 \; \& \; \exists \;n\; |\; ((R(n) = k) \;\&\; (MC_{Est}(j, n) = 1))$. 

We sum up the TP, FP and FN matrices along the last dimension to obtain TP', FP' and FN'. Then $H\in \mathbb{R}^{M'\times M} = FN' + FP'$. We run Hungarian matching on this matrix and we simply select the resulting indices from TP', FP' and FN' and take the sum to get the statistics. If $M' > M$, we consider all the positive entries in unmatched true minimal cycles as false negative. 

\paragraph{H-GT-F and H-EST-F}. MC-F measures the minimal cycle accuracy of the resulting lane graph. However, we also want to measure the performance of the minimal cycle detection by the sub-network. Therefore, we apply the same procedure described above on the estimates of the MC detection sub-network. \textbf{H-GT-F} measures the F-score of the detection subnetwork's estimates compared to the true minimal cycles. This uses the same procedure as MC-F. Therefore, this measures how good the detection head is in detecting the minimal cycles that \textbf{should be} created in the estimated lane graph. \textbf{H-EST-F} measures the minimal cycle detection network's performance in detecting the minimal cycles that are induced by the estimated lane graph. Thus, it shows how good the detection network is in detecting the minimal cycles \textbf{that are created} in the estimated lane graph. In summary, \textbf{MC-F} is a similarity measure between induced lane graph and true lane graph, \textbf{H-GT-F} is between detection estimates and true lane graph and \textbf{H-EST-F} is between detection estimates and induced lane graph. Together, these metrics give a full picture of the performance of the detection network as well as the whole network in estimating the lane graph.

\subsection{Intersection Order}\label{sec:interOrder}

In order to measure the intersection order, we get the best match for each true curve. Let $M(i)$ be the index of the target that the $i$th estimation is matched to and $S(n)$ be the set of indices of estimations that are matched to target $n$. Then the best match curve for a true curve n is $R_n = \argmin_{s} \text{L1}(C_n, s), s\in S(n)$. For the set of true curves ${C_n}$ with $|S(n)| > 0$, given the matched pairs, we extract the intersection orders. We similarly extract the intersection orders for the estimated curves that are the best match for some true curve. Given the set of matched intersection orders we run Levenshtein edit distance and normalize it by the length of intersection order sequence of the true curve. For the unnmatched true curves, we consider the distance to be 2 (the normalized distance that would result from removing all the elements in the estimated sequence and adding the elements in the true sequence if the estimated sequence is the same length as true sequence). The I-Order of a frame is then the mean over the normalized edit distances.

\paragraph{RNN-Order.} The procedure explained above in Section~\ref{sec:interOrder} is applied to the outputs of TR-RNN to measure how accurate the RNN is in estimating the intersection orders. Instead of the orders extracted from the estimated lane graph, we simply use the RNN outputs.

\section{Results}

We provided detailed statistical results in the main paper. Here, we provide more visual results. Visuals confirm the statistical findings that the proposed formulation provides improvement in preservation of the topological structure of the true road network.

\begin{figure*}
    \centering
    \includegraphics[width=.9\linewidth]{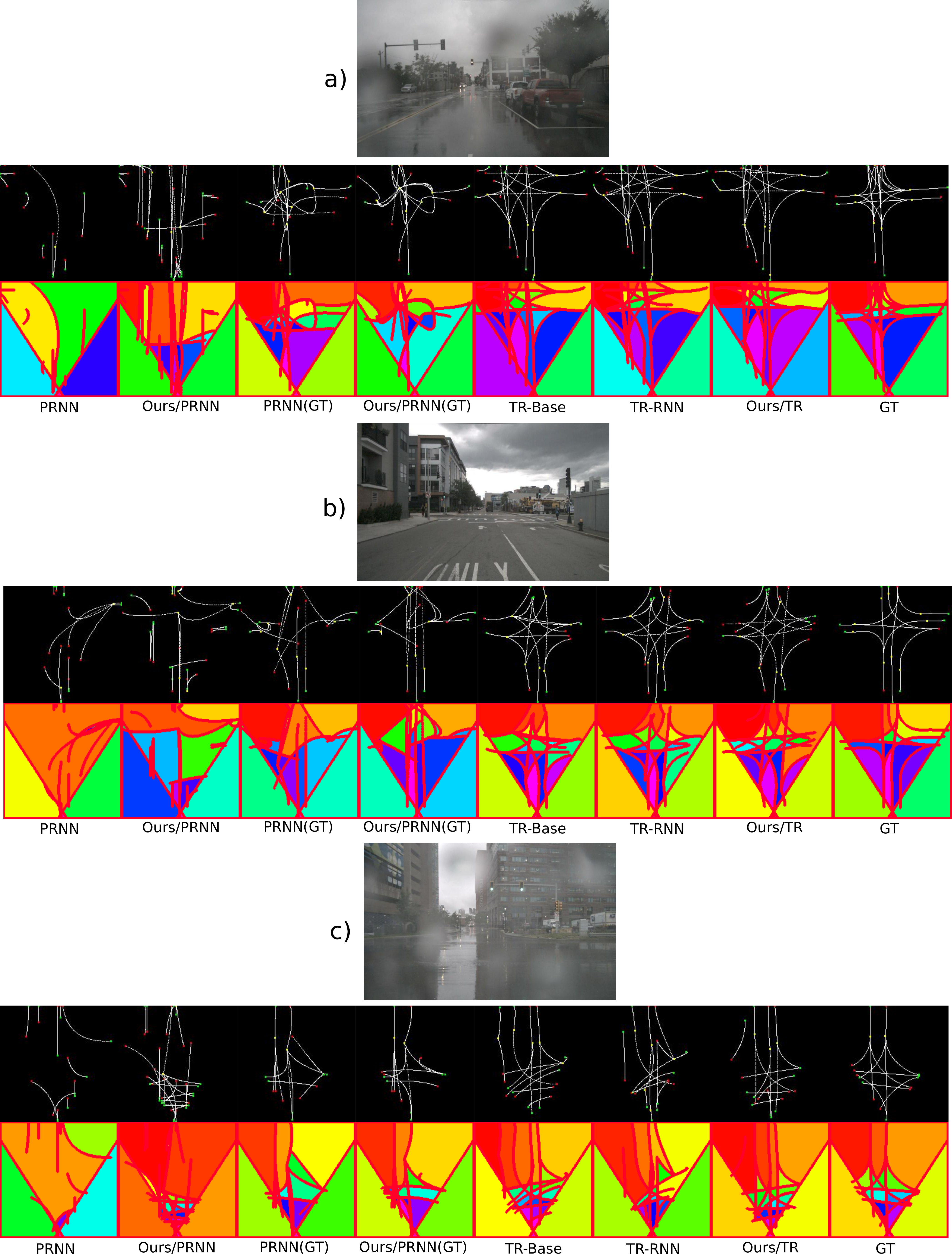}
    \caption{For 3 samples image (above) and the lane graphs and induced minimal cycles (below) in Nuscenes dataset.}
    \label{fig:visuals1}
    
\end{figure*}

\begin{figure*}
    \centering
    \includegraphics[width=.9\linewidth]{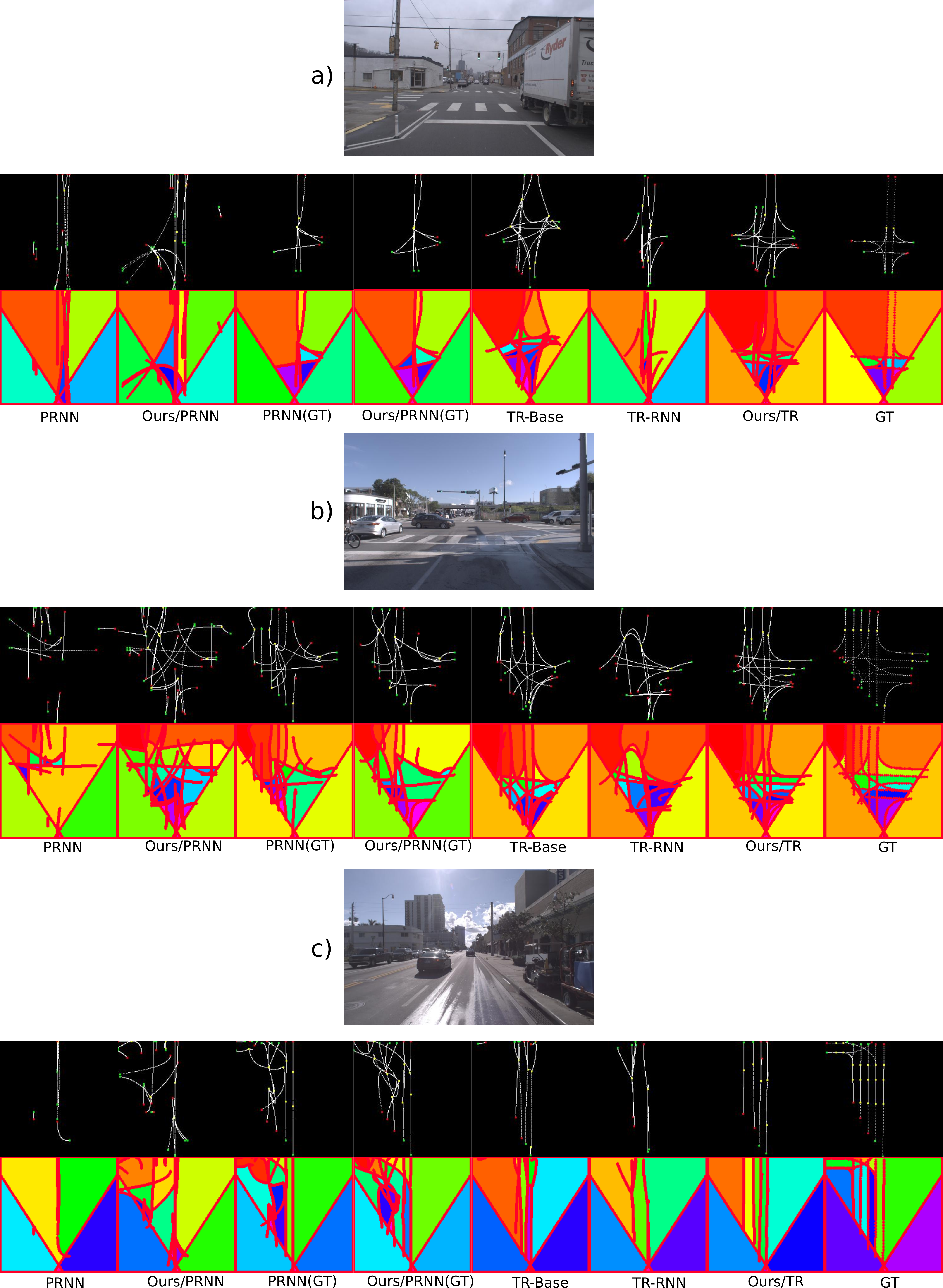}
    \caption{For 3 samples image (above) and the lane graphs and induced minimal cycles (below) in Argoverse dataset.}
    \label{fig:visuals1}
    
\end{figure*}

\end{document}